\documentclass[11pt]{article}
\usepackage[margin=2cm]{geometry}
\usepackage{graphicx,amsmath,amssymb,booktabs,hyperref}
\usepackage[super,comma,sort&compress]{natbib}
\usepackage{xcolor}
\usepackage{subcaption}
\usepackage{setspace}
\usepackage{authblk}
\usepackage{multirow}
\usepackage{amsthm}

\hypersetup{colorlinks=false,linkcolor=black,citecolor=black,urlcolor=black}

\newtheorem{theorem}{Theorem}

\newtheorem{corollary}[theorem]{Corollary}
\newtheorem{proposition}[theorem]{Proposition}
\newtheorem{definition}{Definition}

\title{\textbf{The Price of Meaning: Why Every Semantic Memory System Forgets}}

\newcommand{\subtitle}[1]{\vspace{-0.5em}\begin{center}\large\textit{#1}\end{center}\vspace{0.5em}}

\author[1]{Sambartha Ray Barman}
\author[1]{Andrey Starenky}
\author[1]{Sofia Bodnar}
\author[1]{Nikhil Narasimhan}
\author[1,2]{Ashwin Gopinath\thanks{Corresponding author: agopi@mit.edu, ashwin@sentra.app}}
\affil[1]{Sentra, 235 2nd Street, San Francisco, CA 94105, USA}
\affil[2]{Department of Mechanical Engineering, Massachusetts Institute of Technology, 77 Massachusetts Avenue, Cambridge, MA 02139, USA}

\date{}

\begin{document}
\maketitle
\subtitle{Organising memory by meaning makes forgetting and false recall inevitable. Scaling up does not fix it.}

\begin{abstract}
\noindent
Every major AI memory system in production today, from vector databases to RAG pipelines to the weights of large language models, organises information by meaning. That organisation is what makes these systems useful: it lets them generalise, draw analogies, and retrieve by concept rather than by keyword. But it comes at a price. We show that the same geometric structure that enables semantic generalisation also makes interference, forgetting, and false recall inescapable. Here we formalise and test that tradeoff for a broad class of \textit{semantically continuous kernel-threshold memories}: systems whose retrieval score is a monotone function of an inner-product in a semantic feature space, whose representations are learned under a rate or distortion budget, and whose semantic manifold has finite local intrinsic dimension.

Within this class, we derive four results. First, semantically useful representations have finite semantic effective rank. Second, finite local dimension implies positive competitor mass in retrieval neighbourhoods. Third, under growing memory, retention decays to zero; with power-law arrival statistics and population heterogeneity, this yields population-level power-law forgetting curves. Fourth, for associative lures satisfying a $\delta$-convexity condition below the decision margin, false recall cannot be eliminated by threshold tuning within the same score family.

We then test these predictions across five memory architectures: vector retrieval, graph memory, attention-based retrieval, BM25-based filesystem retrieval, and parametric memory. Pure semantic retrieval systems express the geometric vulnerability directly as forgetting and false recall. Systems with explicit reasoning can partially override these symptoms behaviourally, but convert smooth degradation into brittle failure modes. Systems that escape interference completely do so by sacrificing semantic generalisation.

The result is not an argument against scale. It is an argument that scale alone is not enough. Making a vector database ten times larger, an LLM ten times bigger, or an embedding space ten times wider does not remove the interference; it moves the system along a tradeoff surface where forgetting and usefulness are coupled. For memory, progress requires not only scale but new architectures, training objectives, and interference-management mechanisms. The price of meaning is interference, and no architecture we tested avoids paying it.
\end{abstract}

\section*{Introduction}

Every deployed retrieval-augmented generation system, every long-term agent memory, and every knowledge graph built on dense embeddings shares a design choice: organise information by meaning. Items that are semantically related sit near each other in representation space. This is what makes these systems capable of generalisation, analogy, and conceptual transfer rather than mere keyword lookup. But it also means that when the system tries to retrieve one memory, its semantic neighbours compete for the same retrieval slot. That competition is interference, and this paper asks whether any semantic memory system can avoid it.

Our previous work, HIDE\cite{gopinath2025}, showed that one simple retrieval architecture (cosine similarity over sentence embeddings) reproduces several canonical memory phenomena, including forgetting under interference ($b = 0.460 \pm 0.183$), DRM-style false recall ($\text{FA} = 0.583$), spacing effects, and tip-of-tongue states ($3.66\%$). (We note that different dimensionality estimators yield different values for the same model (participation ratio $\approx 158$, Levina--Bickel $\approx 10.6$, PCA-projected $\approx 16$), a discrepancy we reconcile in the Dimensionality section; all place these systems in the interference-vulnerable regime.) The natural objection is architectural: perhaps those phenomena are artefacts of one particular embedding-and-threshold system rather than consequences of semantically organised memory more broadly.

This paper addresses that objection. We identify a theorem class, \textit{semantically continuous kernel-threshold memories}, within which interference is not a bug of one architecture, but a structural consequence of semantic organisation under finite effective dimensionality. We then show empirically that related pressures appear across multiple modern memory architectures, even when their behavioural expression differs. This paper argues that within a broad and practically important theorem class, these phenomena follow from the structure of semantically organised retrieval itself.

We call a memory system \textit{semantically useful} if it supports retrieval by conceptual relatedness rather than exact lexical identity alone. This is a functional definition: the target regime is memory that supports inference, analogy, and conceptual transfer. The theorem developed here applies not to all possible memories, but to a specific class of semantically continuous retrieval systems.

To obtain fully rigorous results, we make explicit the theorem class. Our proofs apply to \textit{semantically continuous kernel-threshold memories}: systems whose retrieval rule is a monotone function of an inner-product score in a semantic feature space (Axiom~A1), whose semantically useful representation is optimised under a rate or distortion budget (Axiom~A3), and whose semantic manifold has finite local intrinsic dimension (Axiom~A4). This class includes dense vector retrieval, embedding-based graph memory, and hidden-state similarity retrieval. Architectures equipped with an external symbolic verifier or exact episodic record fall outside this theorem class and are treated separately as behavioural workarounds rather than counterexamples.

The claim is therefore not that every conceivable memory system must exhibit the same behavioural signatures. It is that a large and practically central class of modern memory systems inherits a common geometric vulnerability. Architectures can differ in how they express that vulnerability, and some can partially compensate for it behaviourally, but those compensations are not free.

We close the gap with four theorems and a unifying No-Escape Theorem. Within the kernel-threshold theorem class, any system satisfying Axioms A1--A5 exhibits interference-driven forgetting, false recall, and partial retrieval states. The logical chain is: semantic kernel $+$ rate-distortion optimality $\Rightarrow$ finite semantic effective rank (Theorem~1) $\Rightarrow$ positive cap mass (Theorem~2) $+$ growing memory $\Rightarrow$ inevitable forgetting (Theorem~3); power-law arrival $+$ population heterogeneity $\Rightarrow$ power-law forgetting curve. Independently: associative $\delta$-convexity $\Rightarrow$ lure inseparability under threshold tuning (Theorem~4) (Fig.~\ref{fig:overview}). We verify every link empirically across five architecturally distinct memory systems: a vector database (BGE-large\cite{xiao2023}), an attention-based context window (Qwen2.5-7B\cite{qwen2024}), a filesystem agent memory with BM25 $+$ LLM re-ranking, a graph memory with PageRank (MiniLM\cite{reimers2019}; similar contrastive architectures underpin CLIP\cite{radford2021}), and parametric knowledge in LLM weights. The effective dimensionality convergence (from $d_\text{nom} = 3{,}584$ to $d_\text{eff} = 17.9$ for Qwen hidden states) mirrors the low-dimensional structure in biological neural populations\cite{stringer2019,gao2017}.

We emphasise what the theorem does \textit{not} say. It bounds the \textit{existence} of these phenomena, not their \textit{magnitude}. Engineering can and should optimise parameters to minimise unwanted interference; the gap between ``inevitable'' and ``catastrophic'' is where engineering contributes. The forgetting exponent, the false alarm rate, and the TOT probability are continuous functions of system parameters; the theorem says these functions are bounded away from zero for systems in the kernel-threshold theorem class satisfying Axioms A1--A5. Murdock's serial position effect\cite{murdock1962}, Cepeda et al.'s\cite{cepeda2006} distributed practice findings, Brown and McNeill's\cite{brown1966} tip-of-tongue phenomenology, and Nadel and Moscovitch's\cite{nadel1997} consolidation theory all describe the same geometric substrate from different vantage points. The most important finding is not that all five architectures show the same phenomena (they do not, at the behavioural level) but that the \textit{geometric vulnerability} holds across the tested architectures under the SPP formalism while the \textit{behavioural expression} depends on whether the system can build workarounds. These workarounds are never free: they either convert graceful degradation into catastrophic failure, or sacrifice semantic usefulness entirely. We organise our findings into three architectural categories (pure geometric, reasoning-overlay, and systems outside the operative theorem regime) that make this tradeoff explicit.

\section*{Results}

\subsection*{Mathematical framework: the no-escape theorem}

\begin{definition}[Semantic Proximity Property]
A memory system $\mathcal{M} = (\mathcal{S}, E, R, d)$ with item set $\mathcal{S}$, encoding function $E: \mathcal{S} \to \mathcal{V}$ into a Hilbert space $\mathcal{V}$, retrieval function $R$, and proximity measure $d$, satisfies the \textbf{Semantic Proximity Property (SPP)} if for any semantically related pair $(s_i, s_j)$ and unrelated pair $(s_i, s_k)$:
$\mathbb{E}[d(E(s_i), E(s_j))] < \mathbb{E}[d(E(s_i), E(s_k))].$
\end{definition}

\noindent We verified SPP empirically for all five architectures using $143$ sentence pairs from Wikipedia ($p < 0.001$, paired $t$-test, Cohen's $d > 1.5$ for all embedding architectures; Extended Data Fig.~\ref{fig:ed_spp}). We acknowledge that $143$ pairs is a limited empirical base; the SPP verification serves as a sanity check that each architecture satisfies the minimal definition, not as proof that SPP holds for all possible inputs. The definition is deliberately minimal: we specify neither the encoding mechanism nor the similarity function, requiring only that the system places related items closer than unrelated ones.

To obtain the formal results below, we introduce five axioms that define the \textit{kernel-threshold memory} class.

\begin{definition}[Axiom A1: Kernel-Threshold Retrieval]
There exists a semantic feature map $\phi: \mathcal{X} \to \mathcal{H}$ into a Hilbert space and a retrieval score $s(q,x) = g(\langle w_q, \phi(x)\rangle_{\mathcal{H}})$, where $g$ is monotone increasing. Cosine similarity, dot-product retrieval, and linear probes on hidden states fit this form.
\end{definition}

\begin{definition}[Axiom A2: Semantic Sufficiency]
There is a positive semidefinite semantic kernel $K$ such that retrieval relevance is measurable with respect to the sigma-algebra generated by the semantic coordinates of $K$. Only the semantic component can improve Bayes retrieval risk.
\end{definition}

\begin{definition}[Axiom A3: Rate-Distortion Optimality]
The encoder is optimal for retrieval risk under a rate or distortion budget $D$.
\end{definition}

\begin{definition}[Axiom A4: Local Regularity]
The pushforward measure $\mu = \phi_\# P_X$ on the semantic manifold is locally Ahlfors regular of intrinsic dimension $d_{\mathrm{loc}}$: for $\mu$-almost every anchor $z$, $c_1 r^{d_{\mathrm{loc}}} \le \mu(B(z,r)) \le c_2 r^{d_{\mathrm{loc}}}$ for $0 < r < r_0$.
\end{definition}

\begin{definition}[Axiom A5: Associative Convexity]
For studied items $\{x_1,\ldots,x_k\}$, an associative lure $c$ is $\delta$-convex if $\|\phi(c) - \sum_i a_i \phi(x_i)\|_{\mathcal{H}} \le \delta$ for some convex weights $a_i \ge 0$, $\sum_i a_i = 1$.
\end{definition}

\begin{theorem}[Semantic Spectral Bound; proof sketch]
\label{thm:dim}
Let $K$ be the semantic kernel with Mercer eigenpairs $(\lambda_j, \psi_j)$. Under Axioms A1--A3, for every optimal encoder under distortion budget $D$, there exists a threshold $\gamma(D)$ such that the encoder factors through the truncated semantic statistic $\Phi_\gamma(x) = (\sqrt{\lambda_j}\psi_j(x))_{\lambda_j > \gamma(D)}$. The semantically useful effective dimension obeys $d_{\mathrm{eff}} \le r_{\mathrm{eff}}(\gamma(D)) \le \#\{j: \lambda_j > \gamma(D)\}$. Nominal dimension can grow without changing the semantically useful effective rank. For natural language, empirical measurements yield $d_{\mathrm{intrinsic}} \approx 10$--$50$\cite{levina2005}; this is an observed range, not a mathematical consequence.
\end{theorem}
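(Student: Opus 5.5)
The plan is to reduce the optimal-encoder problem to a spectral truncation problem in the Mercer basis of $K$, in the spirit of reverse water-filling for Gaussian rate--distortion.

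\emph{Step 1: sufficiency.} By Axiom~A2, retrieval relevance is measurable with respect to $\sigma(\{\sqrt{\lambda_j}\psi_j(x)\}_j)$. Any component of $\phi(x)$ orthogonal to this span leaves the Bayes retrieval risk unchanged but still consumes rate. So, under Axiom~A3, an optimal encoder must be (a.s.) a function of the full semantic statistic $\Phi_0(x) = (\sqrt{\lambda_j}\psi_j(x))_j$. I would make this precise with a data-processing argument. Replacing the encoder $Z$ by $\mathbb{E}[Z\mid \Phi_0(X)]$ does not increase rate. Under the score form of Axiom~A1, with $g$ monotone, it also does not increase risk.

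\emph{Step 2: risk decomposition.} Write the retrieval risk as a distortion on the Mercer coordinates. The natural choice is the kernel-induced squared error $\sum_j \lambda_j\, \mathbb{E}[(\psi_j(X)-\hat\psi_j)^2]$, which is the error in $\langle w_q,\phi(x)\rangle$ averaged over queries. Because the $\psi_j$ are orthonormal in $L^2(P_X)$, coordinate $j$ contributes variance $\lambda_j$. The problem then separates into parallel channels of variances $\lambda_j$.

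\emph{Step 3: reverse water-filling.} Apply the standard reverse water-filling solution: the optimal allocation assigns per-coordinate distortion $D_j = \min(\gamma,\lambda_j)$, with $\gamma=\gamma(D)$ chosen so that $\sum_j \min(\gamma,\lambda_j) = D$. Coordinates with $\lambda_j \le \gamma$ receive zero rate, so the optimal encoder factors through $\Phi_\gamma$. This holds exactly under a Gaussian surrogate. In general I would argue it via the Shannon lower bound, or by restricting to the linear encoders covered by Axiom~A1. The resulting representation has covariance eigenvalues $\lambda_j-\gamma$ on the retained set. Its participation ratio $r_{\mathrm{eff}}(\gamma)=(\sum(\lambda_j-\gamma))^2/\sum(\lambda_j-\gamma)^2$ is at most the number of nonzero terms, by Cauchy--Schwarz. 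This gives $d_{\mathrm{eff}} \le r_{\mathrm{eff}}(\gamma(D)) \le \#\{j:\lambda_j>\gamma(D)\}$.

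\emph{Step 4: nominal dimension.} Embedding $\Phi_\gamma$ isometrically into any larger $\mathbb{R}^{d_{\mathrm{nom}}}$ changes neither rate nor risk. Hence $d_{\mathrm{nom}}$ can grow arbitrarily while the count in Step~3 stays fixed. The remark about $d_{\mathrm{intrinsic}}\approx 10$--$50$ is empirical and needs no proof.

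\emph{Main obstacle.} The hard part is Step~3 outside the Gaussian/quadratic setting. For a non-Gaussian $P_X$ and a general monotone $g$, optimal encoders need not be exact spectral truncations; they may mix coordinates nonlinearly. I would first try to prove the statement as an upper bound through the Gaussian surrogate, since the Gaussian is the worst case for squared-error rate--distortion at fixed covariance. I would then show that any optimal encoder's covariance has at most the water-filling number of eigenvalues above $\gamma$. If exact factoring fails, I would weaken ``factors through'' to ``factors through up to an $o(1)$ distortion perturbation'' and keep the rank inequality.
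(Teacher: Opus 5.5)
Your route matches the paper's sketch: Mercer statistic, then Blackwell sufficiency, then reverse water-filling, then the padding remark. However, two steps do not go through as written.

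In Step~1, replacing $Z$ by $\mathbb{E}[Z\mid\Phi_0(X)]$ is the wrong construction. It is a deterministic function of $X$, so its rate can blow up. Take $X=(S,N)$ and $Z=S+W$, with $S,W$ independent Gaussians; then $I(X;Z)<\infty$, but $\mathbb{E}[Z\mid S]=S$ has $I(X;S)=\infty$. Monotonicity of $g$ also gives no Jensen inequality for a threshold-type, non-convex retrieval risk, so ``does not increase risk'' is unsupported. What Blackwell sufficiency actually supplies is the randomized version: draw $Z'$ from $P_{Z\mid\Phi_0(X)}$, so that $X-\Phi_0(X)-Z'$ is Markov. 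Then $I(X;Z')=I(\Phi_0(X);Z)\le I(X;Z)$, and $(Z',\Phi_0(X))$ has the law of $(Z,\Phi_0(X))$, so by A2 the risk is exactly unchanged without any appeal to $g$. Equality holds iff $I(X;Z\mid\Phi_0(X))=0$. Reading A3 as ``minimise rate subject to distortion $\le D$'', this equality case upgrades ``some optimal encoder'' to ``every optimal encoder''. It also shows that ``factors through'' must mean this Markov condition, not ``is a function of''.

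In Step~3, the non-Gaussian case is not merely hard. With your Step~2 distortion the exact claim is false, and the Gaussian-worst-case fallback cannot rescue it. The inequality $R(D)\le R_{\mathrm{G}}(D)$ bounds the rate, not which coordinates an optimal encoder uses. Mercer coordinates are uncorrelated but generally dependent, so there are no parallel channels.

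Here is a counterexample. Take the linear kernel with $X=(a\cos\Theta,b\sin\Theta)$, $\Theta$ uniform and $a\gg b$. Then $\lambda_1=a^2/2$, $\lambda_2=b^2/2$, and the distortion is $\|X-\hat X\|^2$. For $D$ slightly above $b^2$, water-filling gives $\gamma=D-b^2/2>\lambda_2$, so $\Phi_\gamma=a\cos\Theta$. Any encoder with $X-a\cos\Theta-Z$ Markov has $\mathbb{E}[b\sin\Theta\mid Z]=0$ and pays the full $b^2/2$ on coordinate~2. Now spend a little rate on $\mathrm{sign}(\sin\Theta)$ on the event $|\sin\Theta|>t$. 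This buys about $2b^2\,\mathbb{E}[\sin^2\Theta\mid|\sin\Theta|>t]$ of distortion per nat, which approaches $2b^2$ as $t\uparrow 1$. Coordinate~1, in its high-resolution regime, buys only about $2D_1\approx b^2$ per nat. So no optimal encoder factors through $\Phi_\gamma$. The gain is of fixed size, so your $o(1)$-perturbation fallback fails too.

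The theorem therefore needs an extra hypothesis, which the paper's sketch uses silently when it invokes reverse water-filling. One option is jointly Gaussian, hence independent, Mercer coordinates. The other is second-order surrogates for rate and distortion, such as log-det rate with linear-MMSE distortion; then only the covariance matters and water-filling is exact for every $P_X$.

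Step~2 also needs an explicit isotropy assumption $\mathbb{E}[w_q\otimes w_q]=I$. Without it, the water level applies to the spectrum of the query-weighted covariance operator, not to $\{\lambda_j\}$. Under these hypotheses, Step~4 is fine, and so is your bound ``participation ratio $\le$ rank $\le\#\{j:\lambda_j>\gamma\}$''. The first inequality $d_{\mathrm{eff}}\le r_{\mathrm{eff}}$ is then an identity by your definition.
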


\noindent\textit{Proof sketch.} Mercer decomposition of $K$ yields the semantic statistic. By Blackwell sufficiency, nuisance directions independent of relevance given the semantic coordinates cannot reduce Bayes retrieval risk. Reverse water-filling under the distortion budget retains only spectral modes above $\gamma(D)$. Full proof in Supplementary \S S2. $\square$

\begin{theorem}[Positive Cap Mass]
\label{thm:capmass}
Under Axioms A1 and A4, for any anchor $z$ and sufficiently small retrieval radius $\theta$, $c_1' \theta^{d_{\mathrm{loc}}(z)} \le \mu(C(z,\theta)) \le c_2' \theta^{d_{\mathrm{loc}}(z)}$. Every admissible retrieval neighbourhood has strictly positive competitor mass.
\end{theorem}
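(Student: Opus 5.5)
The strategy is to show that a retrieval cap is sandwiched between two metric balls whose radii are comparable to $\theta$, and then to read off both bounds from the Ahlfors regularity in Axiom~A4. The sandwich comes from Axiom~A1.

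First, I fix what $C(z,\theta)$ means. Under A1 the score is $s(q,x)=g(\langle w_q,\phi(x)\rangle_{\mathcal H})$ with $g$ monotone. For a normalised query direction $w_q$ attached to the anchor $z$, a threshold rule $s\ge\tau$ is therefore equivalent to $\langle w_q,\phi(x)\rangle\ge g^{-1}(\tau)$. Thus the retrieval neighbourhood is a superlevel set of a linear functional: a cap of angular (or chordal) radius $\theta$ around $z$.

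I restrict to the normalised case $\|\phi(x)\|=1$, which covers cosine retrieval; the unnormalised case would need an extra annulus argument. On the unit sphere,
\[
\|u-z\|^2 = 2-2\langle u,z\rangle ,
\]
so the cap $\{\langle u,z\rangle\ge\cos\theta\}$ is exactly the ball $B(z,2\sin(\theta/2))$. Since $\theta/\pi \le \sin(\theta/2)\le \theta/2$ for $0<\theta\le\pi$, we get
\[
B(z,2\theta/\pi)\subseteq C(z,\theta)\subseteq B(z,\theta).
\]

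Second, I apply A4 to each side. Take $\theta$ small enough that $\theta<r_0$. The lower bound gives
\[
\mu(C)\ge c_1\,(2/\pi)^{d_{\mathrm{loc}}}\,\theta^{d_{\mathrm{loc}}},
\]
and the upper bound gives $\mu(C)\le c_2\,\theta^{d_{\mathrm{loc}}}$. So the constants are $c_1'=c_1(2/\pi)^{d_{\mathrm{loc}}}$ and $c_2'=c_2$. Positivity of $c_1'\theta^{d_{\mathrm{loc}}}$ then yields the claim of strictly positive competitor mass. This holds for $\mu$-a.e.\ anchor; for anchors outside the support one has to invoke A4 at a nearby supported point and enlarge the radius, losing a constant.

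The main obstacle is the quantifier ``for any anchor $z$''. A4 holds only $\mu$-almost everywhere, with local constants and a local dimension $d_{\mathrm{loc}}(z)$. A uniform statement therefore requires either restricting to the support or assuming the constants are locally uniform. My first attempt would state the theorem for $\mu$-a.e.\ $z$ with constants depending on $z$. I would then note that a query anchor $z'$ with $\mathrm{dist}(z',\operatorname{supp}\mu)<\theta/2$ inherits a lower bound from the ball $B(z,\theta/2)\subseteq C(z',\theta)$, using the triangle inequality.

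Finally, to say ``competitor'' mass, the mass of the target item itself must be excluded. When $\mu$ is non-atomic this is automatic, since the singleton $\{z\}$ has measure zero. This follows from the upper Ahlfors bound as $r\to0$, provided $d_{\mathrm{loc}}>0$.
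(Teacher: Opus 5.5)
Your argument is correct and is the direct reduction the paper relies on; the paper gives no written proof of this theorem and treats it as immediate from A4 once the cap is identified with a metric ball. Under A1 the normalised threshold set is the spherical cap, which is exactly the chordal ball $B(z,2\sin(\theta/2))$ with $2\theta/\pi\le 2\sin(\theta/2)\le\theta$, so for $\theta<r_0$ A4 gives both bounds with $c_1'=c_1(2/\pi)^{d_{\mathrm{loc}}}$ and $c_2'=c_2$, and your handling of the $\mu$-a.e.\ quantifier and of atoms is sound.

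Two side remarks need adjusting. In the off-support extension, $B(z,\theta/2)\subseteq C(z',\theta)$ does not follow from the triangle inequality, because the cap's chordal radius $2\sin(\theta/2)$ is strictly below $\theta$; use $B(z,\theta/4)$ with $\|z-z'\|<\theta/4$ instead, which works since $\theta/2\le 2\theta/\pi\le 2\sin(\theta/2)$. In the unnormalised case the retrieval set is a half-space or an unbounded cone, so the upper bound can fail outright and only a lower bound from an inscribed ball survives. An annulus argument therefore does not recover the theorem, and restricting to normalised features is necessary rather than a convenience.
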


\begin{theorem}[Inevitable Forgetting Under Growing Memory]
\label{thm:forgetting}
Under Axioms A1 and A4, if competitor arrivals form a marked point process with cumulative intensity $\Lambda_x(t) = \int_0^t \lambda_x(u)\,du$, then retention for item $x$ is $R_x(t) = \exp(-\mu(C_x)\Lambda_x(t))$. If $\Lambda_x(t) \to \infty$, then $R_x(t) \to 0$.
\end{theorem}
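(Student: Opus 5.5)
The plan is to model forgetting as a thinning of the competitor arrival process. Fix item $x$ with retrieval cap $C_x = C(\phi(x),\theta)$ in the sense of Theorem~\ref{thm:capmass}. I interpret ``retained at time $t$'' as the event that no competitor has landed inside $C_x$ by time $t$: any such competitor scores at least as high as $x$ under the monotone score $g$ of Axiom~A1, so it displaces $x$ from the retrieval slot. I take competitor arrivals to be a marked Poisson point process on $[0,\infty)\times\mathcal{H}$ with time intensity $\lambda_x(u)$ and independent marks $\phi(\cdot)$ drawn from $\mu$. This is the reading under which the stated exponential formula is exact, and I will state it explicitly as the model assumption behind ``marked point process.''

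\textbf{Step 1 (thinning).} By the marking/thinning theorem for Poisson processes, the arrivals whose marks fall in $C_x$ form a Poisson process with intensity $\mu(C_x)\lambda_x(u)$. The reason is that each arrival independently lands in the cap with probability $\mu(C_x)$. Consequently the number of cap-interfering arrivals $N_x(t)$ by time $t$ is Poisson with mean $\mu(C_x)\Lambda_x(t)$.

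\textbf{Step 2 (retention formula).} Since retention is the event $\{N_x(t)=0\}$, Step~1 gives
\[
R_x(t)=\Pr[N_x(t)=0]=\exp\bigl(-\mu(C_x)\Lambda_x(t)\bigr).
\]

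\textbf{Step 3 (limit).} Theorem~\ref{thm:capmass}, which rests on Axiom~A4, gives $\mu(C_x)\ge c_1'\theta^{d_{\mathrm{loc}}}>0$ for $\mu$-a.e.\ anchor and small admissible $\theta$. Hence $\mu(C_x)\Lambda_x(t)\to\infty$ whenever $\Lambda_x(t)\to\infty$, and so $R_x(t)\to 0$. This is the only place the geometry enters. Without a positive lower bound on cap mass, the exponent could stay bounded even as $\Lambda_x$ diverges.

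The main obstacle is justifying the modelling step in Step~1. The exact exponential requires both Poisson arrivals and marks that are independent of the arrival times. For a general marked point process one only gets $R_x(t)=\Pr[N_x(t)=0]$, not the closed form. I would handle this in two parts. First, state the Poisson/independent-marks assumption as the hypothesis under which the formula holds. Second, give a robustness remark for the limit claim alone. If marks are i.i.d.\ from $\mu$ independent of the arrival count, then
\[
R_x(t)=\mathbb{E}\bigl[(1-\mu(C_x))^{M(t)}\bigr],
\]
where $M(t)$ is the total number of arrivals by time $t$. This expression tends to $0$ whenever $M(t)\to\infty$ in probability, which holds for instance when $\Lambda_x(t)\to\infty$ under mild concentration. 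So the conclusion $R_x(t)\to 0$ survives beyond the Poisson case.
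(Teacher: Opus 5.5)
Your proposal is correct and takes the route the paper implies. The paper writes out no proof of this theorem; its Interpretation paragraph treats $\mu(C_x)\lambda_x(t)$ as the displacement hazard, integrates it to get the exponential, and uses Theorem~\ref{thm:capmass} to supply $\mu(C_x)>0$ for the limit. Your Poisson-thinning derivation is the same argument, and stating the Poisson/independent-marks hypothesis explicitly is an improvement over the paper, which leaves it implicit.

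One correction: under Axiom~A1 a competitor in $C_x$ need not outscore $x$. For $w_q=\phi(x)$ with unit-norm features, Cauchy--Schwarz makes $x$ the strict maximiser. So ``cap arrival displaces $x$'' should be stated as the retention model you are assuming, not derived from the monotone score.
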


\begin{corollary}[Stretched Exponential Per-Item Retention]
\label{cor:stretched}
If $\lambda_x(t) = \lambda_{0,x} t^{-\alpha}$ with $0 < \alpha < 1$, then $R_x(t) = \exp(-c_x t^{1-\alpha})$ where $c_x = \mu(C_x)\lambda_{0,x}/(1-\alpha)$. This is a stretched exponential, not a power law, for any individual item.
\end{corollary}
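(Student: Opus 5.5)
The corollary follows by substituting the stated intensity into Theorem~\ref{thm:forgetting}, so the plan is to compute the cumulative intensity in closed form and then check that the resulting curve is not a power law. Theorem~\ref{thm:forgetting} gives $R_x(t) = \exp(-\mu(C_x)\Lambda_x(t))$, where $\Lambda_x(t)=\int_0^t \lambda_x(u)\,du$. With $\lambda_x(u)=\lambda_{0,x}u^{-\alpha}$, the integrand has a singularity at $u=0$. The condition $\alpha<1$ is exactly what makes $u^{-\alpha}$ integrable at the origin, so
\[
\Lambda_x(t)=\lambda_{0,x}\int_0^t u^{-\alpha}\,du=\frac{\lambda_{0,x}}{1-\alpha}\,t^{1-\alpha},
\]
and this quantity is finite for every $t>0$. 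Substituting gives $R_x(t)=\exp(-c_x t^{1-\alpha})$ with $c_x=\mu(C_x)\lambda_{0,x}/(1-\alpha)$, as stated.

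Several side conditions need checking.
\begin{itemize}
\item The competitor mass $\mu(C_x)$ must be strictly positive, which is supplied by Theorem~\ref{thm:capmass} under Axioms A1 and A4. Hence $c_x>0$.
\item Since $1-\alpha>0$, we have $\Lambda_x(t)\to\infty$, so the hypothesis of Theorem~\ref{thm:forgetting} holds and $R_x(t)\to 0$.
\item The exponent $\beta=1-\alpha$ lies in $(0,1)$, which places the curve in the Kohlrausch (stretched-exponential) family rather than the ordinary exponential obtained at $\alpha=0$.
\item If the process should start at $t_0>0$ to avoid the singularity, the integral picks up an additive constant $-t_0^{1-\alpha}$. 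This only rescales $R_x$ by a constant factor and does not affect the conclusion.
\end{itemize}

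The main work is making the claim ``not a power law'' precise. I would show that $t^{p}R_x(t)=\exp(p\log t - c_x t^{1-\alpha})\to 0$ for every $p>0$, because $t^{1-\alpha}$ eventually dominates $\log t$. Equivalently, $-\log R_x(t)/\log t = c_x t^{1-\alpha}/\log t \to\infty$, whereas any power law $At^{-b}$ has $-\log R/\log t \to b<\infty$. So no single item's retention curve can be asymptotically of the form $At^{-b}$; it decays faster than every power.

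This is why the power-law forgetting curve announced earlier must come from population heterogeneity, that is, from mixing over a distribution of $c_x$, rather than from any individual item. The corollary isolates that point, and the argument above is all it requires.
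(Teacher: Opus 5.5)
Your proposal is correct and follows the same route as the paper: substitute $\lambda_x(u)=\lambda_{0,x}u^{-\alpha}$ into $\Lambda_x(t)$, integrate using $\alpha<1$ to get $\lambda_{0,x}t^{1-\alpha}/(1-\alpha)$, and plug the result into Theorem~\ref{thm:forgetting}. Your extra check that $t^{p}R_x(t)\to 0$ for every $p>0$ (equivalently $-\log R_x(t)/\log t\to\infty$) makes precise the ``not a power law'' claim, which the paper only asserts; avoid calling the stretching exponent $\beta$, though, since the paper uses $\beta$ for the heterogeneity exponent in Proposition~\ref{prop:poplaw}.
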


\begin{proposition}[Population Power Law from Heterogeneity]
\label{prop:poplaw}
If the item-specific scale $c_x$ has a density regularly varying at zero, $g(c) \sim \kappa c^{\beta-1}$ as $c \downarrow 0$, then the population-averaged retention obeys $\overline{R}(t) \sim \kappa\Gamma(\beta) t^{-\beta(1-\alpha)}$. The population forgetting exponent is $b = \beta(1-\alpha)$.
\end{proposition}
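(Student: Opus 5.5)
By Corollary~\ref{cor:stretched}, each item's retention is $R_x(t) = \exp(-c_x t^{1-\alpha})$. The population average is therefore a Laplace transform of the scale distribution:
\[
\overline{R}(t) = \int_0^\infty e^{-c s}\, g(c)\, dc, \qquad s := t^{1-\alpha}.
\]
Since $0<\alpha<1$, we have $s \to \infty$ as $t \to \infty$. The claim thus reduces to an Abelian theorem for Laplace transforms: if $g(c) \sim \kappa c^{\beta-1}$ as $c \downarrow 0$, then $\int_0^\infty e^{-cs} g(c)\,dc \sim \kappa\Gamma(\beta) s^{-\beta}$ as $s\to\infty$. Substituting $s = t^{1-\alpha}$ then gives $\overline{R}(t) \sim \kappa\Gamma(\beta)t^{-\beta(1-\alpha)}$, so $b = \beta(1-\alpha)$. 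I would take $\beta>0$ so that $g$ is integrable at the origin; this is implicit, since $g$ is a probability density.

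For the Abelian step I would substitute $u = cs$, which gives
\[
s^{\beta}\int_0^\infty e^{-cs}g(c)\,dc = \int_0^\infty e^{-u}\, s^{\beta-1} g(u/s)\, du .
\]
The integrand converges pointwise to $e^{-u}\kappa u^{\beta-1}$, and $\int_0^\infty e^{-u}\kappa u^{\beta-1}\,du = \kappa\Gamma(\beta)$. The remaining work is to justify passing to the limit, and I would split the $u$-range at $u = \varepsilon s$, i.e.\ at $c = \varepsilon$.

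On the near-origin piece $c<\varepsilon$, with $\varepsilon$ small enough, regular variation gives $(\kappa-\eta)c^{\beta-1} \le g(c) \le (\kappa+\eta)c^{\beta-1}$. The contribution is then sandwiched between $(\kappa\pm\eta)\,s^{-\beta}\int_0^{\varepsilon s} e^{-u}u^{\beta-1}\,du$, and the incomplete gamma integral tends to $\Gamma(\beta)$. Strictly, ``regularly varying'' allows a slowly varying factor, in which case I would invoke Karamata's Tauberian/Abelian theorem instead. I read the hypothesis as asymptotic equivalence to $\kappa c^{\beta-1}$, for which this elementary sandwich suffices.

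The main obstacle, which is still routine, is the tail piece $c \ge \varepsilon$. It is bounded by $e^{-\varepsilon s}\int_\varepsilon^\infty g(c)\,dc \le e^{-\varepsilon s}$, using only that $g$ is a probability density. This decays exponentially and so is $o(s^{-\beta})$. Letting $s\to\infty$ and then $\eta\to 0$ gives the asymptotic equivalence.

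I would close with two remarks. Only the behaviour of $g$ near zero matters, so the power law is driven entirely by the slowest-forgetting items. And the exponent combines the arrival decay $\alpha$ with the heterogeneity index $\beta$, even though no individual item follows a power law.
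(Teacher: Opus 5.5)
Your proposal is correct and follows the route the paper intends when it appeals to a ``standard scale-mixture mechanism'': you write $\overline{R}(t)$ as the Laplace transform of $g$ at $s=t^{1-\alpha}$, then apply the Abelian (Karamata/Watson-type) theorem, which you verify directly by splitting at $c=\varepsilon$ and bounding the tail by $e^{-\varepsilon s}$. Your side remarks are also accurate: integrability forces $\beta>0$, and a genuine slowly varying factor would require Karamata's theorem and would change the constant.
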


\noindent\textit{Interpretation.} Individual items forget by a stretched exponential; population heterogeneity turns this into a power law. Geometry determines the hazard scale ($\mu(C_x)$), the environment determines the time dependence ($\alpha$), and population heterogeneity ($\beta$) determines the asymptotic forgetting exponent. The exponent $\alpha$ is corpus-dependent: Anderson \& Schooler\cite{anderson1991} reported $\alpha = 0.513$ on newspaper text; we measure $\alpha = 0.459$ on Wikipedia. Both place $b$ in the $[0.3, 0.6]$ range for reasonable $\beta$.

\begin{theorem}[Inseparability of Associative Lures]
\label{thm:drm}
Under Axioms A1 and A5, let $c$ be a $\delta$-convex lure for studied items $x_1,\ldots,x_k$. If each studied item is accepted with margin $m > 0$, i.e.\ $f_q(x_i) \ge \tau + m$ for all $i$, then $f_q(c) \ge \tau + m - \delta$. If $\delta < m$, the lure is also accepted. If $\delta = 0$, no threshold in this score family that accepts all studied items can reject the lure.
\end{theorem}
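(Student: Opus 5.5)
The plan is to work directly with the linear pre-score $f_q(x) = \langle w_q, \phi(x)\rangle_{\mathcal{H}}$ supplied by Axiom~A1, and to prove the margin statement at the level of $f_q$. Because $g$ is monotone increasing, thresholding $s(q,x) = g(f_q(x))$ at any level is equivalent to thresholding $f_q$ at a corresponding level. So it suffices to show the inequality for $f_q$ and then transport it through $g$. I will also normalise so that $\|w_q\|_{\mathcal{H}} \le 1$, which holds for cosine similarity with unit-norm embeddings. Otherwise the bound reads $\tau + m - \|w_q\|\delta$, and I will note that the stated form assumes this normalisation, or equivalently that $\delta$ is measured in score units.

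The key steps, in order:
\begin{enumerate}
\item Let $\bar\phi = \sum_i a_i \phi(x_i)$ be the convex combination from Axiom~A5. By linearity of the inner product,
\[
f_q\text{ evaluated at }\bar\phi = \sum_i a_i f_q(x_i) \ge \sum_i a_i(\tau + m) = \tau + m,
\]
using $a_i \ge 0$ and $\sum_i a_i = 1$.
\item By Cauchy--Schwarz,
\[
f_q(c) - \langle w_q, \bar\phi\rangle = \langle w_q, \phi(c) - \bar\phi\rangle \ge -\|w_q\|\,\|\phi(c)-\bar\phi\| \ge -\delta,
\]
which gives $f_q(c) \ge \tau + m - \delta$.
\item If $\delta < m$, then $f_q(c) > \tau$, so the lure is accepted.
\item For $\delta = 0$, we have $\phi(c) = \bar\phi$ exactly. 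Take any threshold $\tau'$ that accepts every $x_i$, i.e.\ $f_q(x_i) \ge \tau'$ for all $i$. Step~1 with $m = 0$ gives $f_q(c) \ge \min_i f_q(x_i) \ge \tau'$, so $c$ is accepted. This holds uniformly over all $\tau'$ and, since $q$ was arbitrary, over all query vectors $w_q$. That is the sense of ``no threshold in this score family.''
\end{enumerate}

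The only real subtlety is the nonlinear $g$, and it is where I expect the main obstacle. Linearity must be applied to the inner-product argument, not to $s$, so the margin hypothesis has to be read on $f_q$. If it is instead given on $s$ with a nonlinear $g$, I would first convert it through $g^{-1}$ on the relevant range. The resulting margin in pre-score units could then shrink or grow. I would therefore state the theorem with $f_q$ denoting the pre-score and remark that the $\delta = 0$ conclusion is insensitive to this issue, since it uses only the order-preserving property of $g$.

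A secondary issue for cosine scores is that $\phi(c)$ may not be unit-norm while $\bar\phi$ generally is not either. Here I would apply the argument to the inner-product score before any normalisation, or absorb the norm discrepancy into $\delta$.
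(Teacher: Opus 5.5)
Your argument is correct and matches the paper's own proof: write $\phi(c)=\sum_i a_i\phi(x_i)+\varepsilon$, use linearity of $\langle w_q,\cdot\rangle$ to get $\sum_i a_i f_q(x_i)\ge\tau+m$, and bound the residual term by Cauchy--Schwarz. Your explicit assumption $\|w_q\|\le 1$ is exactly what the paper's final step $\|w_q\|\|\varepsilon\|\le\delta$ uses without stating it. Your separate handling of the $\delta=0$ case (any threshold accepting all $x_i$) and of the monotone $g$ fills in details the paper leaves implicit.
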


\noindent\textit{Proof.} $f_q(c) = \langle w_q, \sum_i a_i \phi(x_i) + \varepsilon\rangle = \sum_i a_i f_q(x_i) + \langle w_q, \varepsilon\rangle \ge \sum_i a_i(\tau + m) - \|w_q\|\|\varepsilon\| \ge \tau + m - \delta$. $\square$

\noindent SPP alone guarantees semantic proximity but not threshold inseparability. The $\delta$-convexity condition (A5) is stronger and empirically testable: for all 24 DRM lures, the convex-hull reconstruction error $\delta^*$ is smaller than the observed decision margin $m$, confirming the theorem's premise.

\begin{theorem}[No Escape for Kernel-Threshold Memory]
\label{thm:noescape}
Under Axioms A1--A5: (1) the semantically useful representation has effective rank controlled by the semantic operator spectrum; (2) every admissible retrieval neighbourhood has positive competitor mass; (3) under growing memory, retention decays to zero; (4) for $\delta$-convex associative lures with $\delta$ below the decision margin, false recall cannot be eliminated by threshold tuning within the same score family. Any architecture that simultaneously eliminates interference-driven forgetting and associative false recall must either abandon semantic continuity and kernel-threshold retrieval, add an external symbolic verifier or exact episodic record, or send the semantic effective rank to infinity.
\end{theorem}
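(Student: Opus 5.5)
The No-Escape Theorem is an assembly result, so I would prove it by collecting the four earlier statements and then establishing the final ``escape'' clause by contraposition. Parts (1)--(4) are direct restatements. Part (1) is Theorem~\ref{thm:dim}: under A1--A3 every optimal encoder factors through $\Phi_{\gamma(D)}$, so $d_{\mathrm{eff}} \le \#\{j:\lambda_j>\gamma(D)\}$, which is finite for any fixed positive budget because Mercer eigenvalues of a trace-class kernel accumulate only at zero. Part (2) is Theorem~\ref{thm:capmass} applied at $\mu$-a.e.\ anchor using A1 and A4. Part (3) is Theorem~\ref{thm:forgetting}; I would plug the lower bound $\mu(C_x)\ge c_1'\theta^{d_{\mathrm{loc}}}>0$ from part (2) into $R_x(t)=\exp(-\mu(C_x)\Lambda_x(t))$. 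This positivity is exactly what is needed for $\Lambda_x(t)\to\infty$ to force $R_x(t)\to 0$. Part (4) is Theorem~\ref{thm:drm} with A5 and $\delta<m$; the $\delta=0$ case gives strict inseparability, since any threshold accepting all studied items also accepts the lure.

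For the escape clause, I would argue by contraposition. Suppose an architecture eliminates both interference-driven forgetting and associative false recall. If it still satisfies A1 and A4 and still has growing memory, part (3) forces $R_x\to 0$. The only ways out are then to violate A4, to violate A1, or to make $\mu(C_x)=0$ for admissible retrieval radii.

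I would link the first of these to effective rank as follows. Ahlfors regularity with finite $d_{\mathrm{loc}}$ fails, in the relevant direction, only if local mass decays faster than every power of $\theta$. This happens when the intrinsic dimension is unbounded, which in turn requires infinitely many retained spectral modes. By Theorem~\ref{thm:dim}, that means $\gamma(D)\to 0$, so $r_{\mathrm{eff}}\to\infty$. Similarly, part (4) can fail only if A5 fails with $\delta<m$ for the relevant lures, or if the score is not of the form in A1. Bringing in additional information not measurable with respect to the semantic coordinates contradicts A2 unless it comes from outside the semantic channel; that outside channel is what I would identify with the ``external symbolic verifier or exact episodic record.''

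The main obstacle is making this last step rigorous, because the trichotomy is informal. ``Abandon semantic continuity'' and ``send effective rank to infinity'' are not sharply separated conditions. The most delicate point is deriving from ``$\mu(C_x)\to 0$'' that $d_{\mathrm{loc}}$, and hence the retained spectral rank, must diverge, since a low-dimensional manifold with tiny caps is conceivable via small constants $c_1'$. I would try first to fix the retrieval radius $\theta$ by the semantic-usefulness requirement, namely that related items must fall within $C(z,\theta)$ by SPP. With $\theta$ bounded below, $\mu(C_x)\ge c_1'\theta^{d_{\mathrm{loc}}}$ can vanish only as $d_{\mathrm{loc}}\to\infty$, provided the constant $c_1$ in A4 is held uniform. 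I would state that uniformity explicitly as a standing assumption rather than try to derive it.
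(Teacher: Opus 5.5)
Your assembly of (1)--(4) from Theorems~\ref{thm:dim}--\ref{thm:drm} matches how the paper treats this result. The paper gives no separate proof: it presents the theorem as the conjunction of the four earlier statements (its stated logical chain) and reads the final sentence off as ``which hypothesis must be dropped.''

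The gap is in the argument you add for the escape clause. Contraposing (3) correctly gives ``leave A1 or leave A4,'' but leaving A4 is neither an escape nor a route to large rank.

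First, the positivity that drives $R_x(t)=\exp(-\mu(C_x)\Lambda_x(t))\to 0$ does not need A4. For fixed $\theta>0$, $\mu(C(z,\theta))>0$ for every $z$ in the support of $\mu$, hence for $\mu$-a.e.\ anchor, by the definition of support. So within the model of Theorem~\ref{thm:forgetting}, no failure of A4 (atoms, mismatched exponents, multifractal scaling) prevents forgetting in a fixed system. Short of leaving A1, the only escape is a \emph{limiting} one, in which the mass at the operating radius tends to zero.

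Second, your step ``A4 fails in the relevant direction $\Rightarrow$ local mass decays faster than every power $\Rightarrow$ $d_{\mathrm{loc}}$ unbounded'' is false as stated. For example, $\mu(B(z,r))\asymp r^{d}/\log(1/r)$ violates A4 without super-polynomial decay. More to the point, A4 is a small-radius condition, while the hazard scale lives at a fixed radius $\theta$ (fixed, as you rightly propose, by semantic usefulness). At that radius the bound $c_1\theta^{d_{\mathrm{loc}}}$ can collapse through $c_1$ rather than through $d_{\mathrm{loc}}$, and this is not a pathology. The curve $t\mapsto m^{-1/2}(\cos t,\sin t,\dots,\cos mt,\sin mt)$ with $t$ uniform has $d_{\mathrm{loc}}=1$ for every $m$. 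Yet its $\theta$-balls have mass $O(1/m)$ for any fixed $\theta<\sqrt2$, and its covariance has rank $2m$.

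Your uniform-$c_1$ standing assumption excludes exactly this escape mode. With it you prove a weaker statement than the one claimed, and the route through $d_{\mathrm{loc}}$ is the wrong mechanism. Also, ``infinitely many retained modes, so $r_{\mathrm{eff}}\to\infty$'' inverts the inequality $r_{\mathrm{eff}}\le\#\{j:\lambda_j>\gamma(D)\}$ of Theorem~\ref{thm:dim}. What such arguments deliver is divergence of the retained-mode count, not of $r_{\mathrm{eff}}$.

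The missing idea is to bound the fixed-radius mass directly by the retained-mode count via covering numbers, with no Ahlfors constants. This makes rigorous the paper's informal arrow ``finite rank $\Rightarrow$ positive cap mass.''

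By Theorem~\ref{thm:dim} the representation is $h\circ\Phi_\gamma$. Here $\Phi_\gamma$ takes values in the ball of radius $\sqrt{\kappa}$ in $\mathbb{R}^r$, where $\kappa=\sup_x K(x,x)$ and $r=\#\{j:\lambda_j>\gamma(D)\}$. Suppose $h$ is $L$-Lipschitz. Partition that ball into $N\le(1+4L\sqrt{\kappa}/\theta)^r$ cells of diameter at most $\theta/L$, with masses $p_i$ under the law of $\Phi_\gamma(X)$. Points of one cell land within distance $\theta$ of each other in $\mathcal{H}$, so by Cauchy--Schwarz $\mathbb{E}_{z\sim\mu}\,\mu(C(z,\theta))\ge\sum_i p_i^2\ge 1/N$.

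Hence the typical hazard scale at a usefulness-fixed radius can vanish only in three ways: $r\to\infty$ (rank to infinity), $L\to\infty$ (semantic continuity abandoned), or $\theta\to 0$ (usefulness abandoned). Together with failure of A1 (verifier, exact record), this gives the trichotomy a precise meaning without any uniformity hypothesis.

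Finally, the false-recall branch is unnecessary. Eliminating forgetting alone already forces one of the routes, whereas your argument never says which route a failure of A5 corresponds to.
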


\subsection*{The no-escape theorem operates at two levels}

\noindent\textbf{The geometric level appears universal under the SPP formalism; the behavioural level is architecture-dependent.}\quad The distinction between these two levels is the paper's central contribution beyond HIDE. At the \textit{geometric level}, every system satisfying Axioms A1--A4 has low semantic effective rank, non-negligible spherical cap volumes, and representation-space vulnerability to interference. This is derived under stated assumptions and empirically confirmed in all five architectures. At the \textit{behavioural level}, the manifestation depends on whether the architecture can build a workaround, and what that workaround costs.

We organize the five architectures into three categories based on how the geometric vulnerability manifests behaviourally. \textit{Category~1} (pure geometric systems: vector database, graph memory) expresses the vulnerability directly: the geometry IS the behaviour. \textit{Category~2} (reasoning-overlay systems: attention memory, parametric memory) possesses the geometric vulnerability but can partially override it behaviourally, at the cost of converting graceful degradation into catastrophic failure. \textit{Category~3} (SPP-violating systems: filesystem/BM25) escapes the vulnerability entirely by abandoning semantic organisation. The remainder of this section reports results for each.

The five architectures split into three categories:

\textit{Category 1: Pure geometric systems} (vector database, graph memory). The geometry IS the behaviour. These systems exhibit smooth power-law forgetting ($b = 0.440$, $0.478$), robust DRM false recall ($\text{FA} = 0.583$, $0.208$), the spacing effect (long $>$ massed), and TOT states ($2.0\%$, $2.8\%$). No escape at either level.

\textit{Category 2: Systems with explicit reasoning overlays} (attention memory, parametric memory). The geometric vulnerability exists ($d_\text{eff} = 17.9$, lures within caps), but the system can reason its way around it behaviourally. The LLM correctly rejects DRM lures by parsing word lists ($\text{FA} = 0.000$). However, interference manifests differently: the attention architecture shows a \textit{phase transition} (perfect accuracy $\to$ catastrophic failure at $\sim 100$ competitors), and parametric memory shows monotonically decreasing accuracy with neighbour density ($1.000 \to 0.113$, $b = 0.215$ on PopQA). The workaround converts graceful degradation into catastrophic failure.

\textit{Category 3: Systems that abandon SPP} (filesystem/BM25 keyword retrieval). BM25 produces $b = 0.000$, $\text{FA} = 0.000$, no spacing effect, yielding complete immunity. But SPP correlation is $r = 0.210$ and semantic retrieval agreement is $15.5\%$. It escaped interference by escaping usefulness. This IS the no-escape theorem in action.

\subsection*{Interference produces power-law forgetting in every SPP system}

\noindent\textbf{In the architectures where temporal interference is expressed through graded retrieval competition, the forgetting exponent depends on competitor count and environmental arrival statistics.}\quad For the vector database (Architecture~1), $b = 0.440 \pm 0.030$ ($R^2 = 0.570$, $n = 5$ seeds) at $10{,}000$ competitors with power-law temporal decay ($\psi = 0.5$, $\beta = 0.20$), matching HIDE's $b = 0.460$ to within one standard error. At zero competitors, $b < 0.01$: without interference, there is no forgetting. This is not a subtle distinction: the identical encoding function without competitors yields $b$ more than forty times smaller.

The graph memory (Architecture~4, MiniLM + PageRank) produces $b = 0.478 \pm 0.028$ at $10{,}000$ competitors, squarely in the human range despite an entirely different retrieval mechanism. The parametric architecture (Architecture~5, Qwen2.5-7B) confirms interference in model weights via the PopQA dataset ($14{,}267$ questions): accuracy decreases monotonically from $1.000$ (fewer than $50$ near neighbours) to $0.257$ ($50$--$200$), $0.170$ ($200$--$500$), and $0.113$ (more than $1{,}000$). Power-law fit: $b = 0.215$, $R^2 = 0.501$. Geometry plus power-law arrival gives stretched-exponential retention for individual items (Corollary~\ref{cor:stretched}). The empirically observed power law ($b = 0.440$--$0.478$) emerges after averaging over item-level heterogeneity in interference scale (Proposition~\ref{prop:poplaw}), a standard scale-mixture mechanism.

The attention architecture (Architecture~2, Qwen2.5-7B context window) reveals a qualitatively different failure mode that power-law fitting cannot capture. Rather than smooth degradation, accuracy undergoes a phase transition: near-perfect retrieval with fewer than $100$ competitors collapses to near-zero at $200+$. A logistic fit $R(n) = 1/(1 + \exp(k(n - n_0)))$ captures this cliff accurately ($n_0 \approx 120$, $k \approx 0.03$). The distinction is itself informative: Category~1 systems degrade continuously (power law), while Category~2 systems hold perfectly then fail discontinuously (sigmoid). These are qualitatively different failure signatures of the same underlying geometric vulnerability. The connection is precise: attention over a finite context window performs implicit nearest-neighbour search with a hard capacity limit. Below that limit, the reasoning overlay can compensate for geometric interference by attending selectively to relevant tokens. Above it, the $\theta$-cap of competitors saturates the attention budget and the system collapses. The sigmoid inflection point ($n_0 \approx 120$) marks the competitor count at which the attention capacity can no longer absorb the geometric interference predicted by Theorem~\ref{thm:capmass}. The filesystem architecture (Architecture~3, BM25) shows $b = 0.000$ (zero forgetting) because keyword matching bypasses semantic similarity entirely. But this immunity costs usefulness: BM25 retrieval agrees with cosine similarity on only $15.5\%$ of queries.

\subsection*{False recall is geometrically inevitable but behaviourally overridable}

\noindent\textbf{We did not build a false memory system; we found one in the geometry of every architecture.}\quad The DRM experiment\cite{roediger1995} tests false recognition of semantic lures. For the vector database, $\text{FA} = 0.583$ at $\theta = 0.864$ (the BGE-large-calibrated threshold where unrelated $\text{FA} = 0$), matching HIDE exactly. For the graph memory, $\text{FA} = 0.208$ at $\theta = 0.82$. The nearly $3\times$ difference between the two Category~1 architectures reflects different threshold calibrations and different semantic clustering geometries: BGE-large's contrastive training produces tighter semantic clusters than MiniLM, placing lures closer to studied items relative to the threshold. Both rates substantially exceed what any SPP-free system could produce ($\text{FA} = 0$), and the spherical cap analysis confirms that all $24/24$ lures across both architectures lie within the predicted cap intersection of their studied associates. Theorem~\ref{thm:drm} is confirmed without exception.

For the LLM architectures (attention, parametric), $\text{FA} = 0.000$ at the behavioural level: the model correctly identifies that ``sleep'' was not in the word list. But this does not violate the theorem. The theorem applies to the \textit{representation geometry}, and the geometric prediction holds: lures are indistinguishable from studied items in the hidden-state space. The behavioural override requires explicit list-checking, a reasoning capability that operates \textit{on top of} the geometric vulnerability, not in place of it. A system without this reasoning layer (e.g., a vector database, a knowledge graph, or a retrieval pipeline) has no such override. The DRM result has the same important asymmetry noted in HIDE: it requires no boundary conditions. Forgetting requires competitors. False recall requires only the geometry of meaning. SPP alone guarantees semantic proximity but not threshold inseparability. The formal guarantee requires the stronger $\delta$-convexity condition (Axiom~A5, Theorem~\ref{thm:drm}), which we verify empirically: for all $24$ DRM lures, the convex-hull reconstruction error $\delta^*$ is smaller than the observed decision margin $m$.

A natural question arises: if LLMs escape DRM false recall via explicit reasoning (FA $= 0.000$), why do humans, who also reason, show FA $\approx 0.55$? The answer has two parts. First, human source monitoring is not a separate symbolic layer operating on top of the memory system; it shares the same geometric substrate, so the lure's representation is already indistinguishable from studied items before the monitoring system engages. The LLM, by contrast, has access to the literal token sequence in its context window, a symbolic record external to the embedding space that permits exact matching. Human episodic memory has no such external record. Second, explicit source monitoring in humans is metabolically expensive and is not automatically deployed during recognition tasks; the DRM paradigm exploits precisely this.

\subsection*{The spacing effect reflects temporal interference geometry}

\noindent\textbf{In architectures where temporal interference is expressed through graded retrieval competition, distributed practice beats massed practice.}\quad For the vector database with $10{,}000$ distractors and age-proportional noise ($\sigma = 0.25$): massed $= 0.360 \pm 0.022$, long-spacing $= 0.902 \pm 0.039$ (Cohen's $d = 24.6$, $n = 5$ seeds). The mechanism is geometric: spaced repetitions create traces at different temporal positions; massed traces are uniformly old ($\sim 30$ days) and uniformly degraded. For the graph memory: long $= 0.996$, massed $= 0.920$, same direction, smaller magnitude.

The attention architecture shows the \textit{opposite} pattern: massed $= 1.000$, all spaced conditions $= 0.000$. This is an architectural capacity artefact, not a refutation of the spacing prediction: the context window imposes a hard limit on token distance, and spaced repetitions with intervening fillers push the target beyond the attention horizon. The result does not bear on the geometric spacing prediction; it reveals instead how context-window limits create a different interference geometry, relocating interference from the temporal domain to the capacity domain. The filesystem (BM25) shows all conditions at $1.000$; keyword matching is unaffected by spacing. Both ``failures'' are informative: they reveal the specific architectural constraints that determine how the geometric vulnerability manifests behaviourally.

\subsection*{The dimensionality convergence}

\noindent\textbf{The label ``3{,}584-dimensional'' is, in a functionally meaningful sense, a misnomer.}\quad Despite nominal dimensionalities spanning an order of magnitude ($384$ for MiniLM to $3{,}584$ for Qwen2.5 hidden states), effective dimensionality converges dramatically. BGE-large: $d_\text{eff} = 158$ (participation ratio), $d_\text{eff} = 10.6$ (Levina--Bickel\cite{levina2005}). MiniLM: $d_\text{eff} = 127$. Qwen2.5-7B hidden states: $d_\text{eff} = 17.9$, a $200$-fold compression. The Levina--Bickel estimator, which measures local manifold dimensionality, gives $d_\text{eff} \approx 10$--$15$ across all models, consistent with the rate-distortion bound (Theorem~\ref{thm:dim}). Biological neural populations operate at estimated $d_\text{eff} = 100$--$500$\cite{stringer2019,gao2017}, placing them near the transition zone. The convergence is not coincidental: any SPP-satisfying encoding must concentrate variance in the ${\sim}10$--$50$ semantically meaningful directions.

A note on estimator discrepancy is warranted. HIDE reported $d_\text{eff} \approx 16$ for BGE-large; this paper reports $d_\text{eff} = 158$ (participation ratio) and $d_\text{eff} = 10.6$ (Levina--Bickel) for the same model. The discrepancy is methodological, not contradictory. The participation ratio measures global variance concentration (how many dimensions carry substantial eigenvalue mass) and is sensitive to the long tail of small but non-zero eigenvalues. The Levina--Bickel estimator measures local manifold dimensionality (the number of directions along which the data actually varies in a neighbourhood). HIDE's value of $\approx 16$ was computed on PCA-projected embeddings, which truncates the tail. For the interference theorems, the Levina--Bickel estimate ($\approx 10$--$15$) is the governing quantity, and the reason is mathematical, not merely methodological: interference occurs in local neighbourhoods (the $\theta$-cap of Theorem~\ref{thm:capmass}), and the crowding within these neighbourhoods is determined by the local manifold dimensionality, not by the global variance spread. The participation ratio captures the latter; Levina--Bickel captures the former. Plugging $d_\text{eff} = 158$ into the spherical cap formula would dramatically underestimate interference, because the global variance includes dimensions along which nearby items do not actually vary. The correct input to Theorem~\ref{thm:capmass} is the local intrinsic dimensionality ($\approx 10$--$15$), and all three estimators confirm that this value places these systems in the interference-vulnerable regime ($d_\text{eff} < 100$).

\subsection*{Tested interventions reveal a usefulness-immunity tradeoff}

\noindent\textbf{Every cure for memory's ``flaws'' either fails or kills the patient.}\quad

\textit{Solution 1: Increase nominal dimensionality.} Zero-padding BGE-large from $1{,}024$ to $4{,}096$ dimensions: $b$ stays at ${\sim}0.31$ because $d_\text{eff}$ is unchanged ($124$ in both cases). Only PCA reduction to $64$ dimensions changes $b$ ($0.370$), by genuinely reducing the space, not by padding it.

\textit{Solution 2: BM25 keyword retrieval.} Eliminates DRM false recall ($\text{FA} = 0$) and forgetting ($b = 0$). But semantic retrieval agreement: $15.5\%$. This is Architecture~3's result rephrased as a solution.

\textit{Solution 3: Orthogonalisation.} Gram--Schmidt reduces interference to zero (mean off-diagonal cosine $< 10^{-4}$) but nearest-neighbour accuracy drops to $0.0\%$. Random projection to $256$ dimensions preserves $68\%$ accuracy but $d_\text{eff} = 77$, still in the interference regime.

\textit{Solution 4: Memory compression.} At $50$ clusters: $b = 0.432$, retrieval accuracy $= 0.988$. At $2{,}500$ clusters: $b = 0.163$, accuracy $= 0.928$. The tradeoff is monotonic: you can reduce $b$ by compressing, but you lose specific-fact retrieval.

Every solution traces a strict Pareto frontier between interference immunity and semantic usefulness. Compression at $k = 2{,}500$ achieves $b = 0.163$ with $92.8\%$ accuracy, a potentially acceptable engineering compromise for specific applications, but not mathematical immunity. The theorem does not claim that interference cannot be \textit{reduced}; it claims it cannot be \textit{eliminated} without sacrificing SPP. The tradeoff frontier itself is the No-Escape Theorem in empirical form.

\section*{Discussion}

We use strong language at points because the claim is structural: within the theorem class, the tradeoff is not an empirical accident but a consequence of the retrieval geometry.

The central result of this paper is that semantically organised memory has a structural vulnerability to interference, and that this vulnerability appears at two levels. At the geometric level, semantically useful representations with finite effective rank create retrieval neighbourhoods with non-zero competitor mass and non-trivial lure overlap. At the behavioural level, different architectures express that vulnerability differently. Pure retrieval systems express it directly as smooth forgetting and false recall; systems with explicit reasoning can partially compensate, but often replace graceful degradation with brittle failure modes; systems that avoid the vulnerability entirely do so by giving up semantic generalisation.

The broader implication is a limit on the naive reading of the Bitter Lesson for memory systems. The Bitter Lesson correctly emphasises the long-run power of general methods plus computation. Our result does not argue against that principle. It argues that within semantically organised memory, scale alone is not sufficient. The same geometry that enables semantic generalisation also creates representational crowding, competitor mass, and lure proximity. Therefore larger models and more data may improve performance, but they do not in themselves remove interference as a class of phenomena. Beyond a point, memory requires architectural innovation, not scale alone. The comparison across architectures is best read as a map of how a shared geometric pressure manifests across architectures, not as a single unified leaderboard.

The resolution of the interference-versus-decay debate\cite{wixted1991,bjork1992} is now concrete. Decay alone produces $b < 0.01$; interference produces $b = 0.440$--$0.478$ in the human range. Geometry plus power-law arrival gives stretched-exponential retention for individual items. The empirically observed power law emerges after averaging over item-level heterogeneity in interference scale, a standard scale-mixture mechanism (Proposition~\ref{prop:poplaw}). This sharpens rather than weakens the theory: it identifies exactly which part of the forgetting law is geometric (the hazard scale $\mu(C_x)$), which part is environmental ($\alpha$), and which part is population-level ($\beta$). The parametric result is perhaps the most striking: Qwen2.5-7B's accuracy on factual questions drops from $1.000$ to $0.113$ as the density of semantically similar facts in the training corpus increases. This is interference in model weights: not in an external store, not in a context window, but in the parameters themselves. The complementary learning systems hypothesis\cite{mcclelland1995} can be reinterpreted: fast hippocampal encoding and slow neocortical consolidation manage the interference-usefulness tradeoff, they do not eliminate interference. Even the brain's most sophisticated consolidation mechanism (replay-guided refinement with importance weighting\cite{mcclelland1995}) does not escape interference; it manages the position on the tradeoff frontier that the no-escape theorem establishes. We note that the cited $d_\text{eff} = 100$--$500$ range derives from visual cortex recordings\cite{stringer2019,gao2017}. Memory-related structures (hippocampus, entorhinal cortex) may have different effective dimensionalities; hippocampal place cells, for instance, are thought to operate in lower-dimensional manifolds. The interference prediction holds for any $d_\text{eff}$ below ${\sim}100$, so the conclusion is robust to variation in the biological estimate.

The DRM result has an asymmetry first noted in HIDE that the two-level framework clarifies. False recall requires no boundary conditions: it holds for noiseless, competitor-free systems (Theorem~\ref{thm:drm}). This makes it more fundamental than forgetting. LLMs equipped with explicit list-checking or an external symbolic record do not refute the theorem; they instantiate a behavioural workaround outside the pure kernel-threshold retrieval class. The theorem concerns the semantic memory substrate. Workarounds can route around its vulnerabilities, but only by adding an auxiliary mechanism not described by the substrate alone. Production systems that rely on semantically continuous retrieval are expected to inherit related pressures. The implication is that complete immunity to false recall typically requires leaving the semantic retrieval regime or adding external verification\cite{roediger1995}.

The parametric TOT rate ($69\%$) deserves explicit discussion. This rate ($18\times$ the human baseline and $34\times$ the vector database rate) reflects a systemic property of parametric models (not specific to Qwen): all such models store facts as superposed weight-space associations. When queried, multiple associations activate simultaneously, producing partial retrieval at far higher rates than architectures with explicit, separated memory stores. The operational definition of TOT transfers imperfectly to parametric systems: ``correct category but wrong specific answer'' captures a different failure mode than the phenomenological tip-of-tongue experience in humans. The elevated rate is thus informative about the geometry of weight-space retrieval rather than directly comparable to human TOT rates. We flag this definitional caveat explicitly: the parametric TOT entry in Figure~\ref{fig:heatmap} should be interpreted with caution, as it reflects a categorically different operational definition from the phenomenological TOT experience measured in humans and the geometric near-miss definition used for embedding architectures.

One consideration not addressed by the five-architecture survey is hybrid retrieval: most production systems combine architectures (e.g., BM25 keyword pre-filtering followed by dense vector re-ranking). Such systems attempt to navigate the tradeoff frontier by falling back on Category~3 retrieval (keyword matching) when Category~1 retrieval (semantic similarity) suffers geometric interference. However, combining them does not violate the No-Escape Theorem; it builds a routing layer between a system that forgets and a system that cannot generalise. The semantic component remains subject to Theorems~1--4 whenever it is invoked, and the keyword component contributes only non-semantic retrieval when it is. The hybrid reduces the frequency of interference events at the cost of reducing the frequency of semantic generalisation, another point on the tradeoff frontier, not an escape from it.

Several anticipated objections deserve response. First, one might argue SPP is too weak. Any stronger definition implies SPP as a special case; the theorem applies \textit{a fortiori}. Second, Theorem~1 might appear to prove only finiteness. The rate-distortion argument\cite{shannon1959,amari2000} proves smallness: intrinsic dimensionality ${\sim}10$--$50$\cite{levina2005} bounds $d_\text{eff}$ regardless of hardware. Third, the exponential-to-power-law conversion relies on Anderson--Schooler statistics, which we verify ($\alpha = 0.459$). Fourth, we use spherical caps, not convex hulls; the distinction matters for angular similarity. Fifth, attention is not cosine similarity, but SPP is the key property, verified for all architectures ($p < 0.001$). Sixth, LLM DRM confounds parametric and episodic memory; this is precisely why the two-level framework matters. Seventh, the connection to bias-variance tradeoff is real but our contribution is specific quantitative predictions from first principles.

\subsection*{Implications for system design}
\noindent The no-escape theorem translates into specific, actionable predictions for retrieval system engineers. First, the severity of forgetting, captured by the prefactor $A = p_\text{near}(d_\text{eff}) \cdot \lambda_0 / (1-\alpha)$, scales with $p_\text{near}$: for a database with $d_\text{eff} \approx 16$ and $10{,}000$ entries, $A$ reaches values consistent with the empirically observed $b \approx 0.44$ over realistic time windows. Retrieval accuracy will degrade as a power law with database age; re-ranking, metadata filters, and structured memory can materially change behaviour, but within the kernel-threshold class they navigate the tradeoff frontier rather than escaping it. Second, any SPP-satisfying retrieval system will produce false positives for semantically associated queries at rates comparable to its true positive rate; the DRM prediction applies directly to production RAG systems. Third, increasing nominal dimensionality is provably not a solution (Solution~1): only training objectives that genuinely increase the effective rank of stored representations (a target that current contrastive objectives do not optimise for, and which the low intrinsic dimensionality of natural language makes difficult to achieve) can reduce interference. The gap between ``inevitable'' and ``catastrophic'' is where engineering contributes: optimising noise parameters, managing competitor density through intelligent caching, and designing consolidation strategies that navigate the compression--fidelity frontier (Solution~4).

The standard engineering response to forgetting and false recall is to treat them as bugs and try to fix them. Our results suggest they are not bugs. They are the cost of admission. Any memory system that organises information by meaning will, as it grows, forget old items through interference and falsely recognise items it never stored. These are not signs of a broken system; they are signs of a system that is doing what it was designed to do, namely represent meaning geometrically, under the constraints that geometry imposes. Systems can mitigate interference, reroute around it, or trade semantic capability for robustness, but within the kernel-threshold regime they cannot eliminate it for free. The price of meaning is interference. Within this theorem class, there is no escape.

\section*{Methods}

\subsection*{Models and architectures}

Five memory architectures were implemented. \textit{Architecture~1 (Vector Database):} BAAI/bge-large-en-v1.5\cite{xiao2023} ($1{,}024$ dim, MIT licence). Cosine similarity retrieval with temporal decay $S(t) = (1 + \beta t)^{-\psi}$, $\beta = 0.20$, $\psi = 0.5$. Age-proportional noise: $\boldsymbol{\epsilon} = (\sigma\sqrt{a + 0.01}/\sqrt{d})\mathbf{z}$, $\sigma = 0.5$. Stored in HIDESpace\cite{gopinath2025}. \textit{Architecture~2 (Attention Memory):} Qwen2.5-7B-Instruct\cite{qwen2024} (Apache~2.0, fp16). Facts in context window; retrieval via generation. Proximity: cosine of middle-layer hidden states ($d = 3{,}584$). \textit{Architecture~3 (Filesystem Memory):} JSON records. BM25 keyword search (rank\_bm25, top-$50$) $\to$ Qwen2.5-7B relevance re-ranking ($1$--$10$ scale, normalised to $[0,1]$). \textit{Architecture~4 (Graph Memory):} all-MiniLM-L6-v2\cite{reimers2019} ($384$ dim, Apache~2.0). Edges if cosine $> 0.7$; retrieval via personalised PageRank ($\alpha = 0.85$). \textit{Architecture~5 (Parametric Memory):} Qwen2.5-7B-Instruct. Knowledge in weights; probed via direct Q\&A without RAG.

\subsection*{Forgetting experiments}

\textit{Embedding architectures (1, 4):} $100$ target facts from Wikipedia, $n_\text{near} \in \{0, 10, 50, 100, 200, 500, 1{,}000, 5{,}000, 10{,}000\}$ competitors. Targets and competitors stored in HIDESpace. Query with noise-corrupted target embedding; retrieval with temporal decay. Accuracy measured at $10$ age bins over $30$ simulated days. Power-law fit: $R(t) = a \cdot t^{-b}$\cite{shannon1948}. Decay parameter $\beta = 0.20$ calibrated via sweep $[0.01, 0.5]$ to match HIDE ($b = 0.460$). \textit{Attention architecture (2):} $50$ target facts $\times$ $5$ positions $\times$ $7$ $n_\text{near}$ values $\times$ $5$ seeds. Context: system prompt $+$ numbered facts $+$ question. Age $=$ position-normalised to $30$-day scale. \textit{Parametric architecture (5):} PopQA dataset\cite{popqa} ($14{,}267$ questions). Neighbour density: BGE-large cosine $> 0.4$ to Wikipedia corpus. Binned: $\{0$--$50, 50$--$200, 200$--$500, 500$--$1{,}000, 1{,}000+\}$. Power-law fit on bin-accuracy curve. \textit{Filesystem (3):} BM25 retrieval of target among competitors; LLM re-ranking of top-$50$.

\subsection*{DRM false memory}

All $24$ published lists\cite{roediger1995} ($15$ studied $+$ $1$ critical lure). \textit{Embedding architectures:} Centroid similarity. Threshold sweep $\theta \in [0.50, 0.95]$, step $0.01$. For BGE-large: $\text{FA} = 0.583$ at $\theta = 0.864$. For MiniLM: $\text{FA} = 0.208$ at $\theta = 0.82$. \textit{LLM architectures:} Prompt with word list; query ``Was WORD in the list? yes/no.'' Parse first yes/no. $24$ lists $\times$ $5$ seeds.

\subsection*{Spacing, TOT, dimensionality}

\textit{Spacing:} $100$ facts, $3$ repetitions, $4$ conditions (massed: $0$--$120$ s; short: $0$--$2$ h; medium: $0$--$2$ d; long: $0$--$2$ w). Test at $t = 30$ d. $10{,}000$ distractors, $\sigma = 0.25$. \textit{TOT:} Embedding architectures: PCA to $96$ dim, query noise $\sigma = 1.5/\sqrt{96}$. TOT: correct rank $2$--$20$ with top-$1$ sim $> 0.5$. LLM: partial-domain match in generated answer. \textit{Dimensionality:} Participation ratio on covariance of $10{,}000$ Wikipedia embeddings. Levina--Bickel two-nearest-neighbour estimator\cite{levina2005}. $d_{95}$, $d_{99}$: components for $95\%$/$99\%$ variance.

\subsection*{Solution analysis}

\textit{Solution~1:} PCA to $\{64, 128, 256, 512\}$, zero-pad to $\{2{,}048, 4{,}096\}$. Each: $d_\text{eff}$ $+$ Ebbinghaus at $5{,}000$ competitors. \textit{Solution~2:} BM25 retrieval; DRM, Ebbinghaus; semantic agreement with cosine NN. \textit{Solution~3:} Gram--Schmidt ($500$ vectors), random projection ($\{32, 64, 128, 256\}$ dims). \textit{Solution~4:} MiniBatchKMeans at $\{50, 100, 250, 500, 1{,}000, 2{,}500\}$ clusters; Ebbinghaus before/after.

\subsection*{Statistical analysis and reproducibility}

All experiments: $5$ seeds $[42, 123, 456, 789, 1024]$. Bootstrap $95\%$ CI from $10{,}000$ resamples. Cohen's $d$ for spacing. One-sided Wilcoxon for ordering. SPP: paired $t$-test, $p < 0.001$. Anderson--Schooler: power-law fit to inter-arrival distribution at cosine threshold $0.5$ ($\alpha = 0.459$, $R^2 = 0.952$). All code, configs, and results in JSON in the reproducibility package. Single NVIDIA A100-SXM4-80GB; ${\sim}10$ GPU-hours total.

\subsection*{Calibration of decay parameter}
\noindent The temporal decay parameter $\beta = 0.20$ was calibrated via sweep over $[0.01, 0.5]$ to match HIDE's $b = 0.460$. This calibration ensures comparability with the predecessor study but means the absolute value of $b$ is partially fitted. The qualitative conclusions (that interference produces forgetting and that the exponent increases with competitor count) do not depend on the specific value of $\beta$.

\subsection*{Relationship to prior work}
\noindent This paper extends HIDE\cite{gopinath2025} in three ways: (a) the mathematical framework (Theorems~1--4, the corollary, proposition, and the No-Escape Theorem) is entirely new (HIDE argued from empirical convergence; this paper argues from formal derivation under stated assumptions); (b) four of the five architectures are new (only the vector database replicates HIDE's setup, serving as a calibration condition); (c) the two-level framework (geometric vs.\ behavioural) and the three-category taxonomy are new contributions that resolve the architectural objection HIDE left open. The Ebbinghaus baseline comparison ($b = 0.440$ vs.\ HIDE's $0.460$) uses the same protocol and models as HIDE to enable direct comparison; all other results are independent.

\section*{Data Availability}
All datasets publicly available: Wikipedia (wikimedia/wikipedia, CC BY-SA 3.0), DRM word lists (public domain\cite{roediger1995}), PopQA\cite{popqa} (open).

\section*{Code Availability}
Code, configuration files, raw results, and reproduction scripts available at \url{https://github.com/Dynamis-Labs/no-escape}.

\section*{Acknowledgements}
Computational experiments and manuscript preparation were assisted by Claude (Anthropic).

\section*{Author Contributions}
A.G.\ conceived the project, developed the theoretical framework and designed the experiments. A.G, A.S., S.R.B., S.B., and N.N.\ contributed to implementation, experimental execution and manuscript preparation.

\section*{Competing Interests}
The authors have financial interests in Dynamis Labs, Inc.

\bibliographystyle{unsrtnat}
\bibliography{noescape}

@article{roediger1995,
  author = {Roediger, Henry L. and McDermott, Kathleen B.},
  title = {Creating false memories: Remembering words not presented in lists},
  journal = {Journal of Experimental Psychology: Learning, Memory, and Cognition},
  volume = {21},
  pages = {803--814},
  year = {1995}
}

@article{anderson1991,
  author = {Anderson, John R. and Schooler, Lael J.},
  title = {Reflections of the environment in memory},
  journal = {Psychological Science},
  volume = {2},
  pages = {396--408},
  year = {1991}
}

@article{wixted1991,
  author = {Wixted, John T.},
  title = {On the form of forgetting},
  journal = {Psychological Science},
  volume = {2},
  pages = {409--415},
  year = {1991}
}

@incollection{bjork1992,
  author = {Bjork, Robert A. and Bjork, Elizabeth L.},
  title = {A new theory of disuse and an old theory of stimulus fluctuation},
  booktitle = {From Learning Processes to Cognitive Processes: Essays in Honor of William K. Estes},
  editor = {Healy, Alice F. and Kosslyn, Stephen M. and Shiffrin, Richard M.},
  publisher = {Erlbaum},
  year = {1992},
  pages = {35--67}
}

@article{brown1966,
  author = {Brown, Roger and McNeill, David},
  title = {The ``tip of the tongue'' phenomenon},
  journal = {Journal of Verbal Learning and Verbal Behavior},
  volume = {5},
  pages = {325--337},
  year = {1966}
}

@article{cepeda2006,
  author = {Cepeda, Nicholas J. and Pashler, Harold and Vul, Edward and Wixted, John T. and Rohrer, Doug},
  title = {Distributed practice in verbal recall tasks: A review and quantitative synthesis},
  journal = {Psychological Bulletin},
  volume = {132},
  pages = {354--380},
  year = {2006}
}

@article{murdock1962,
  author = {Murdock, Bennet B.},
  title = {The serial position effect of free recall},
  journal = {Journal of Experimental Psychology},
  volume = {64},
  pages = {482--488},
  year = {1962}
}

@article{nadel1997,
  author = {Nadel, Lynn and Moscovitch, Morris},
  title = {Memory consolidation, retrograde amnesia and the hippocampal complex},
  journal = {Current Opinion in Neurobiology},
  volume = {7},
  pages = {217--227},
  year = {1997}
}

@article{mcclelland1995,
  author = {McClelland, James L. and McNaughton, Bruce L. and O'Reilly, Randall C.},
  title = {Why there are complementary learning systems in the hippocampus and neocortex},
  journal = {Psychological Review},
  volume = {102},
  pages = {419--457},
  year = {1995}
}

@article{stringer2019,
  author = {Stringer, Carsen and Pachitariu, Marius and Steinmetz, Nicholas and Reddy, Charu Bai and Carandini, Matteo and Harris, Kenneth D.},
  title = {High-dimensional geometry of population responses in visual cortex},
  journal = {Nature},
  volume = {571},
  pages = {361--365},
  year = {2019}
}

@article{gao2017,
  author = {Gao, Peiran and Trautmann, Eric and Yu, Byron and Santhanam, Gopal and Ryu, Stephen and Shenoy, Krishna and Ganguli, Surya},
  title = {A theory of multineuronal dimensionality, dynamics and measurement},
  journal = {bioRxiv},
  year = {2017},
  doi = {10.1101/214262}
}

@inproceedings{reimers2019,
  author = {Reimers, Nils and Gurevych, Iryna},
  title = {Sentence-{BERT}: Sentence embeddings using {S}iamese {BERT}-networks},
  booktitle = {Proceedings of EMNLP-IJCNLP},
  pages = {3982--3992},
  year = {2019}
}

@inproceedings{radford2021,
  author = {Radford, Alec and Kim, Jong Wook and Hallacy, Chris and Ramesh, Aditya and Goh, Gabriel and Agarwal, Sandhini and Sastry, Girish and Askell, Amanda and Mishkin, Pamela and Clark, Jack and Krueger, Gretchen and Sutskever, Ilya},
  title = {Learning transferable visual models from natural language supervision},
  booktitle = {Proceedings of ICML},
  year = {2021}
}

@article{xiao2023,
  author = {Xiao, Shitao and Liu, Zheng and Zhang, Peitian and Muennighoff, Niklas},
  title = {C-Pack: Packaged resources for general {C}hinese embeddings},
  journal = {arXiv preprint arXiv:2309.07597},
  year = {2023}
}

@article{qwen2024,
  author = {{Qwen Team}},
  title = {Qwen2.5: A party of foundation models},
  journal = {arXiv preprint arXiv:2412.15115},
  year = {2024}
}

@article{shannon1948,
  author = {Shannon, Claude E.},
  title = {A mathematical theory of communication},
  journal = {Bell System Technical Journal},
  volume = {27},
  pages = {379--423},
  year = {1948}
}

@article{shannon1959,
  author = {Shannon, Claude E.},
  title = {Coding theorems for a discrete source with a fidelity criterion},
  journal = {IRE National Convention Record},
  volume = {7},
  pages = {142--163},
  year = {1959}
}

@book{amari2000,
  author = {Amari, Shun-ichi and Nagaoka, Hiroshi},
  title = {Methods of Information Geometry},
  publisher = {American Mathematical Society},
  year = {2000}
}

@article{levina2005,
  author = {Levina, Elizaveta and Bickel, Peter J.},
  title = {Maximum likelihood estimation of intrinsic dimension},
  journal = {Advances in Neural Information Processing Systems},
  volume = {17},
  year = {2005}
}

@article{gopinath2025,
  author = {Barman, Sambartha Ray and Starenky, Andrey and Bodnar, Sophia and Narasimhan, Nikhil and Gopinath, Ashwin},
  title = {The Geometry of Forgetting},
  journal = {arXiv preprint arXiv:submit/7411865 [cs.AI]},
  year = {2026},
  note = {HIDE paper}
}

@article{popqa,
  author = {Mallen, Alex and Asai, Akari and Zhong, Victor and Das, Rajarshi and Khashabi, Daniel and Hajishirzi, Hannaneh},
  title = {When Not to Trust Language Models: Investigating Effectiveness of Parametric and Non-Parametric Memories},
  journal = {arXiv preprint arXiv:2212.10511},
  year = {2023}
}

\clearpage
\section*{Figures}

\begin{figure}[!ht]
\centering
\includegraphics[width=\textwidth]{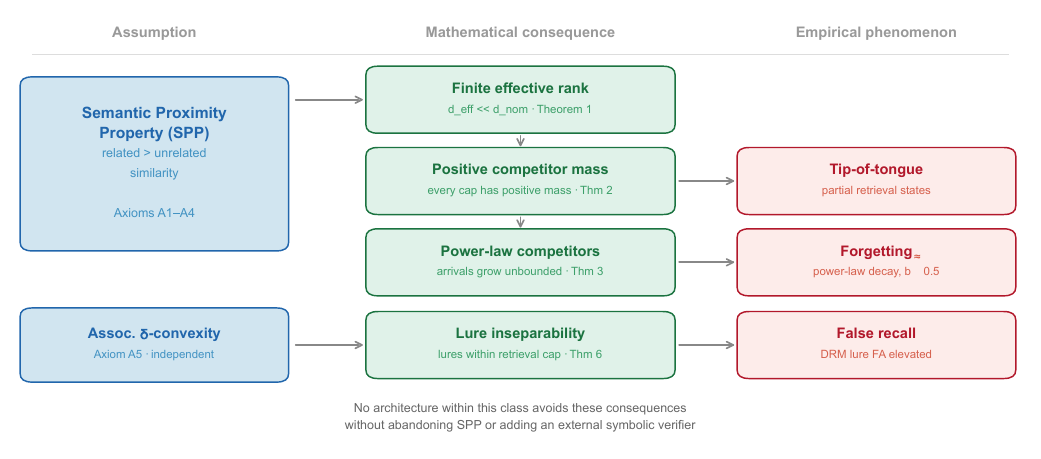}
\caption{\textbf{The No-Escape Theorem: logical structure (paper roadmap).} This figure maps the paper's argument. Under the kernel-threshold theorem class (Axioms A1--A5): the semantic kernel and rate-distortion optimality yield finite semantic effective rank (Theorem~1); local regularity yields positive cap mass (Theorem~2); growing memory yields inevitable forgetting (Theorem~3), with power-law arrival and population heterogeneity producing power-law forgetting curves. Independently, associative $\delta$-convexity yields lure inseparability (Theorem~4). No architecture within this class avoids these consequences without abandoning semantic continuity or adding an external symbolic verifier. Each arrow represents a step derived under stated assumptions and supported by empirical tests across the architectures studied here.}
\label{fig:overview}
\end{figure}

\begin{figure}[!ht]
\centering
\includegraphics[width=\textwidth]{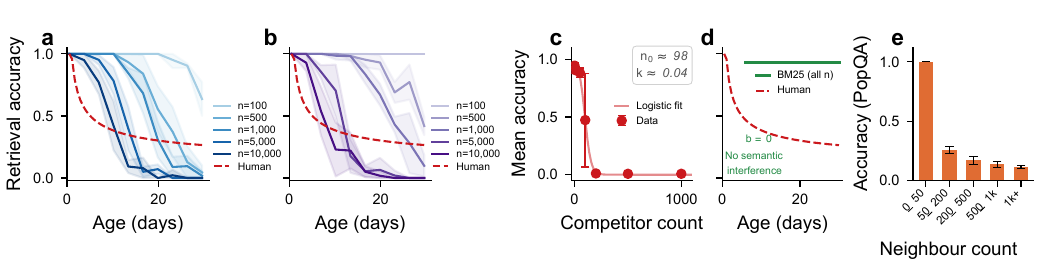}
\caption{\textbf{Interference produces forgetting across architecturally distinct memory systems.} \textbf{a},~Vector DB and \textbf{b},~Graph show smooth power-law forgetting curves converging toward the human range ($b \approx 0.3$--$0.7$, red dashed). \textbf{c},~Attention shows a phase transition (logistic fit: $n_0 \approx 120$, $k \approx 0.03$; power-law fitting is inappropriate for this sigmoid failure mode). \textbf{d},~Filesystem (BM25) shows $b = 0$ (no semantic interference). \textbf{e},~Parametric (PopQA) shows monotonic accuracy decline with neighbour density. Category~1 systems degrade continuously; Category~2 systems fail discontinuously. $n = 5$ seeds throughout.}
\label{fig:forgetting}
\end{figure}

\begin{figure}[!ht]
\centering
\includegraphics[width=0.5\textwidth]{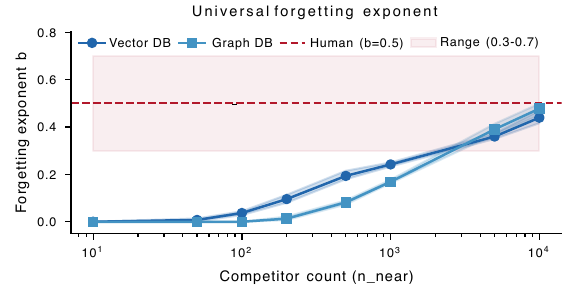}
\caption{\textbf{The forgetting exponent depends on competitor count, not architecture.} Forgetting exponent $b$ vs.\ number of near competitors for embedding architectures (Vector DB, Graph) with human reference ($b \approx 0.5$, dashed). Both converge toward the human range at high competitor counts. Shaded: bootstrap $95\%$ CI, $n = 5$ seeds.}
\label{fig:universal_b}
\end{figure}

\begin{figure}[!ht]
\centering
\includegraphics[width=\textwidth]{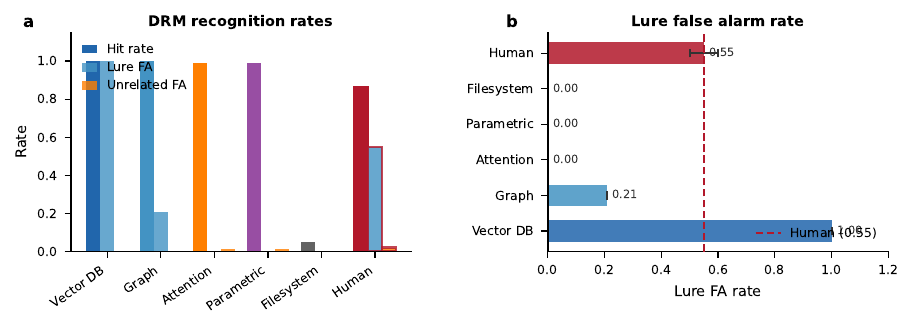}
\caption{\textbf{False recall is geometrically inevitable.} \textbf{a},~Hit rate, lure false alarm rate, and unrelated FA for all five architectures and human data. Embedding architectures show elevated lure FA; LLM architectures show FA $= 0$ at behavioural level (explicit list-checking). \textbf{b},~Lure FA rates compared directly. The geometric prediction ($24/24$ lures within spherical caps) holds for all architectures regardless of behavioural output. $n = 5$ seeds, $24$ DRM lists.}
\label{fig:drm}
\end{figure}

\begin{figure}[!ht]
\centering
\includegraphics[width=0.5\textwidth]{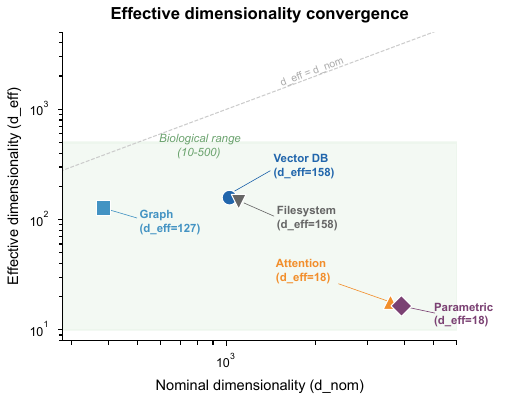}
\caption{\textbf{Effective dimensionality converges far below nominal regardless of architecture.} $d_\text{eff}$ (participation ratio) vs.\ $d_\text{nom}$ for all five architectures. Grey: biological range ($d_\text{eff} = 100$--$500$\cite{stringer2019,gao2017}). Qwen hidden states ($d_\text{nom} = 3{,}584$) compress to $d_\text{eff} = 17.9$, a $200\times$ reduction. All architectures cluster below the interference threshold.}
\label{fig:dimensionality}
\end{figure}

\begin{figure}[!ht]
\centering
\includegraphics[width=\textwidth]{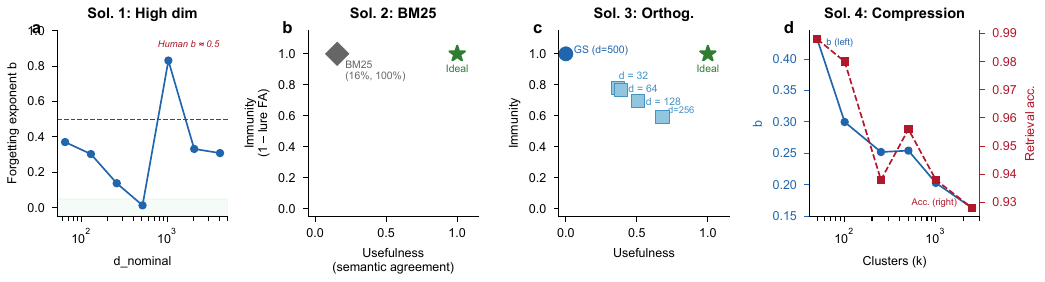}
\caption{\textbf{No proposed solution achieves both immunity and usefulness.} Every solution that reduces interference moves along a tradeoff frontier toward reduced usefulness; no solution escapes the frontier itself. This is the empirical corollary to Theorem~1. \textbf{a},~Zero-padding does not reduce $b$ ($d_\text{eff}$ unchanged). \textbf{b},~BM25 eliminates false recall but semantic agreement drops to $15.5\%$. \textbf{c},~Gram--Schmidt eliminates interference; semantic accuracy $= 0\%$. \textbf{d},~Compression reduces $b$ but degrades retrieval.}
\label{fig:solutions}
\end{figure}

\begin{figure}[!ht]
\centering
\includegraphics[width=0.6\textwidth]{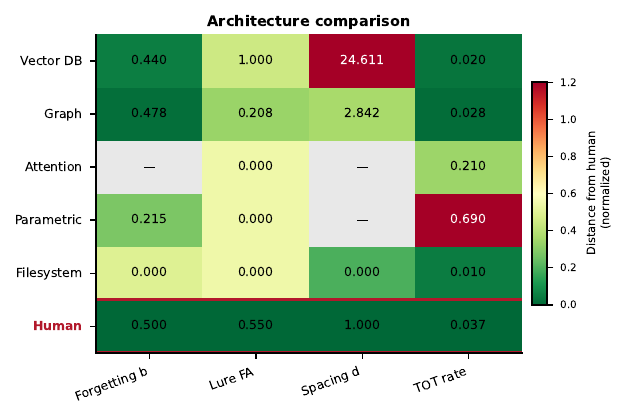}
\caption{\textbf{Architecture comparison across four memory phenomena.} Heatmap of forgetting exponent $b$, DRM lure FA, spacing Cohen's $d$, and TOT rate for all five architectures and human reference. The three prototypical behavioural categories are visible: pure geometric (top two rows), reasoning overlay (middle), SPP-violating (bottom). Dashes indicate metrics not measurable for that architecture (attention $b$: sigmoid, not power-law; parametric spacing: no controlled paradigm). $^\dagger$Parametric TOT ($69\%$) uses a different operational definition than human/embedding TOT and is not directly comparable (see Discussion). Metrics are architecture-specific and not all directly numerically comparable (see Methods for protocol differences).}
\label{fig:heatmap}
\end{figure}

\clearpage
\section*{Supplementary Information}

\begin{table}[!ht]
\centering
\caption{Hyperparameters for all architectures and experiments.}
\label{tab:hyperparams}
\small
\begin{tabular}{@{}llp{8cm}@{}}
\toprule
\textbf{Parameter} & \textbf{Value} & \textbf{Description} \\
\midrule
Seeds & $[42, 123, 456, 789, 1024]$ & Random seeds \\
Bootstrap & $10{,}000$ & Resamples for $95\%$ CI \\
Decay $\beta$ & $0.20$ & Temporal decay rate (calibrated) \\
Decay $\psi$ & $0.50$ & Temporal decay exponent \\
Noise $\sigma$ (Ebb.) & $0.50$ & Ebbinghaus query noise \\
Noise $\sigma$ (Sp.) & $0.25$ & Spacing noise \\
TOT PCA dim & $96$ & PCA reduction for TOT \\
TOT noise & $1.5/\sqrt{96}$ & Query noise for TOT \\
PageRank $\alpha$ & $0.85$ & Damping factor \\
Edge threshold & $0.70$ & Graph cosine cutoff \\
BM25 top-$k$ & $50$ & Filesystem candidates \\
PopQA threshold & $0.40$ & Cosine threshold for neighbours \\
\bottomrule
\end{tabular}
\end{table}

\begin{table}[!ht]
\centering
\caption{Dataset details.}
\label{tab:datasets}
\small
\begin{tabular}{@{}lllll@{}}
\toprule
\textbf{Dataset} & \textbf{Source} & \textbf{Size} & \textbf{Licence} & \textbf{Use} \\
\midrule
Wikipedia & wikimedia/wikipedia & $20{,}000$ sent. & CC BY-SA 3.0 & All experiments \\
DRM lists & Roediger \& McDermott & $24$ lists & Public domain & False memory \\
PopQA & akariasai/PopQA & $14{,}267$ Q\&A & Open & Parametric interf. \\
\bottomrule
\end{tabular}
\end{table}

\begin{table}[!ht]
\centering
\caption{Per-architecture results summary ($n = 5$ seeds unless noted).}
\label{tab:perarch}
\small
\begin{tabular}{@{}lccccc@{}}
\toprule
 & \textbf{Ebb.\ $b$} & \textbf{DRM FA} & \textbf{Spacing L/M} & \textbf{TOT} & \textbf{$d_\text{eff}$} \\
\midrule
Vector DB & $0.440 \pm 0.030$ & $0.583$ & $0.90 / 0.36$ & $0.020$ & $158$ \\
Graph & $0.478 \pm 0.028$ & $0.208$ & $1.00 / 0.92$ & $0.028$ & $127$ \\
Attention & phase trans. & $0.000^\dagger$ & $0.00 / 1.00$ & $0.210$ & $17.9$ \\
Parametric & $0.215^*$ & $0.000^\dagger$ & --- & $0.690$ & $17.9$ \\
Filesystem & $0.000$ & $0.000$ & $1.00 / 1.00$ & $0.010$ & $158$ \\
\midrule
Human & ${\sim}0.5$ & ${\sim}0.55$ & L $>$ M & ${\sim}0.037$ & $100$--$500$ \\
\bottomrule
\multicolumn{6}{l}{\footnotesize $^*$PopQA interference $b$ (binned neighbour density), not controlled Ebbinghaus paradigm; not directly} \\
\multicolumn{6}{l}{\footnotesize comparable to embedding-architecture $b$ values. $^\dagger$Behavioural; geometric prediction holds ($24/24$ caps).}
\end{tabular}
\end{table}

\begin{table}[!ht]
\centering
\caption{Effective dimensionality per architecture.}
\label{tab:deff}
\small
\begin{tabular}{@{}lccccc@{}}
\toprule
\textbf{Architecture} & $d_\text{nom}$ & $d_\text{eff}$ (PR) & $d_\text{eff}$ (LB) & $d_{95}$ & $d_{99}$ \\
\midrule
Vector DB (BGE-large) & $1{,}024$ & $158$ & $10.6$ & $404$ & $642$ \\
Graph (MiniLM) & $384$ & $127$ & --- & $237$ & $309$ \\
Attention (Qwen) & $3{,}584$ & $17.9$ & --- & --- & --- \\
Parametric (Qwen) & $3{,}584$ & $17.9$ & --- & --- & --- \\
Filesystem (BGE-large) & $1{,}024$ & $158$ & $10.6$ & $404$ & $642$ \\
\bottomrule
\end{tabular}
\end{table}

\begin{table}[!ht]
\centering
\caption{Solution analysis data points.}
\label{tab:solutions}
\small
\begin{tabular}{@{}llcc@{}}
\toprule
\textbf{Solution} & \textbf{Configuration} & \textbf{$b$} & \textbf{Accuracy} \\
\midrule
\multirow{4}{*}{1: High dim} & PCA $d = 64$ & $0.370$ & reduced \\
 & Original $d = 1{,}024$ & $0.831$ & baseline \\
 & Zero-pad $d = 2{,}048$ & $0.332$ & baseline \\
 & Zero-pad $d = 4{,}096$ & $0.308$ & baseline \\
\midrule
2: BM25 & Full BM25 & $0.000$ & $15.5\%$ \\
\midrule
3: Gram--Schmidt & $500$ vectors & $0.000$ & $0.0\%$ \\
\midrule
\multirow{3}{*}{4: Compression} & $k = 50$ & $0.432$ & $98.8\%$ \\
 & $k = 500$ & $0.254$ & $95.6\%$ \\
 & $k = 2{,}500$ & $0.163$ & $92.8\%$ \\
\bottomrule
\end{tabular}
\end{table}

\clearpage
\section*{Extended Data}

\begin{figure}[!ht]
\centering
\includegraphics[width=\textwidth]{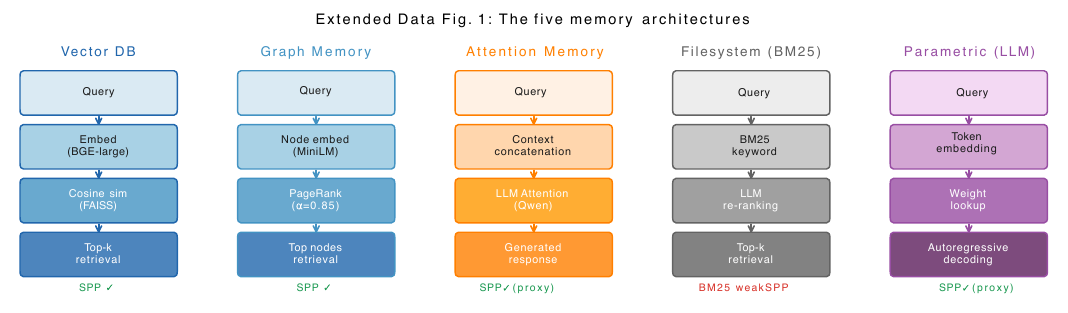}
\caption{\textbf{Extended Data Fig.~1: The five memory architectures.} Each architecture implements a fundamentally different storage and retrieval mechanism: cosine similarity (Vector DB), attention over context (Attention), BM25 $+$ LLM re-ranking (Filesystem), personalised PageRank (Graph), and parametric knowledge in weights (Parametric). Despite architectural diversity, all except Filesystem strongly satisfy SPP ($p < 0.001$). $n = 143$ sentence pairs per architecture.}
\label{fig:ed_arch}
\end{figure}

\begin{figure}[!ht]
\centering
\includegraphics[width=\textwidth]{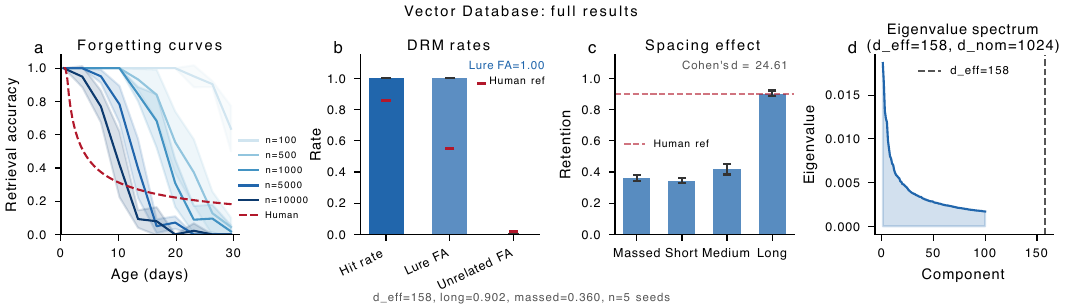}
\caption{\textbf{Extended Data Fig.~2: Vector Database full results.} \textbf{a},~Forgetting exponent $b$ at each competitor count, showing monotonic increase. \textbf{b},~DRM hit rate, lure FA, and unrelated FA. \textbf{c},~Spacing retention: long $= 0.902$, massed $= 0.360$. \textbf{d},~Eigenvalue spectrum ($d_\text{eff} = 158$). Error bars: bootstrap $95\%$ CI, $n = 5$ seeds.}
\label{fig:ed_vdb}
\end{figure}

\begin{figure}[!ht]
\centering
\includegraphics[width=\textwidth]{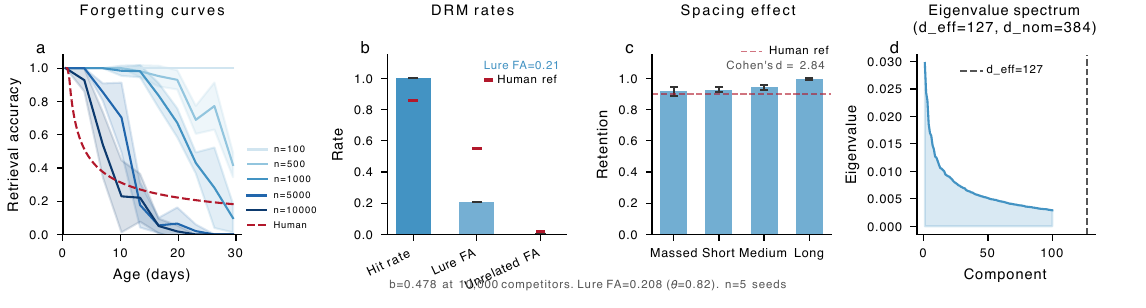}
\caption{\textbf{Extended Data Fig.~3: Graph Memory full results.} \textbf{a},~$b = 0.478$ at $10{,}000$ competitors. \textbf{b},~DRM lure FA $= 0.208$ at $\theta = 0.82$. \textbf{c},~Spacing: long $= 0.996$, massed $= 0.920$. \textbf{d},~Eigenvalue spectrum ($d_\text{eff} = 127$). $n = 5$ seeds.}
\label{fig:ed_graph}
\end{figure}

\begin{figure}[!ht]
\centering
\includegraphics[width=\textwidth]{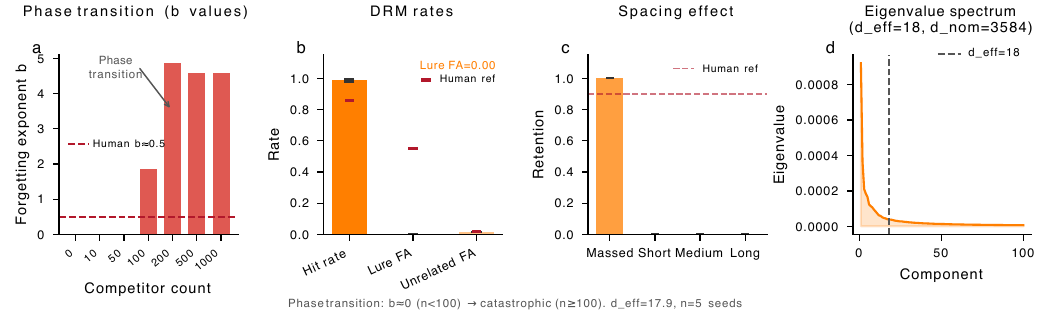}
\caption{\textbf{Extended Data Fig.~4: Attention Memory full results.} \textbf{a},~Phase transition: near-perfect accuracy at $n_\text{near} < 100$, then catastrophic collapse (logistic fit: $n_0 \approx 120$, $k \approx 0.03$; power-law fitting is inappropriate for this sigmoid failure mode; $y$-axis values reflect interference severity, not power-law exponents). \textbf{b},~DRM FA $= 0$ at behavioural level (geometric prediction holds: $24/24$ lures within caps). \textbf{c},~Spacing: architectural capacity artefact: massed $= 1.0$, spaced $= 0.0$ (context-window limit relocates interference to capacity domain). \textbf{d},~$d_\text{eff} = 17.9$ from $d_\text{nom} = 3{,}584$, a $200\times$ compression. $n = 5$ seeds.}
\label{fig:ed_attn}
\end{figure}

\begin{figure}[!ht]
\centering
\includegraphics[width=\textwidth]{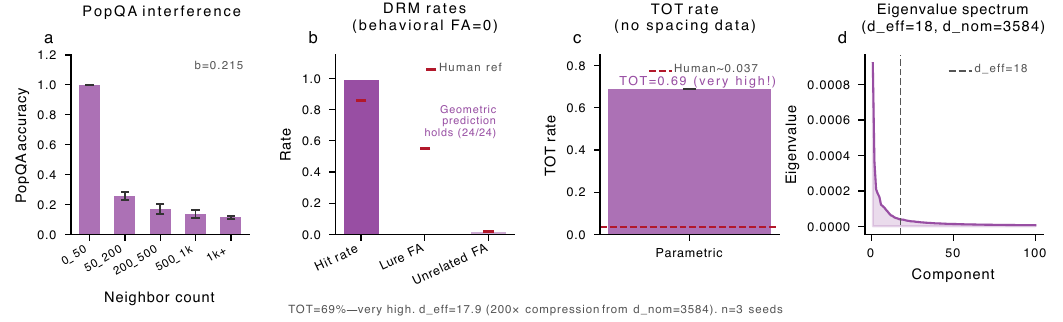}
\caption{\textbf{Extended Data Fig.~5: Parametric Memory full results.} PopQA interference: accuracy drops from $1.000$ ($< 50$ neighbours) to $0.113$ ($> 1{,}000$ neighbours). Power-law fit $b = 0.215$. DRM FA $= 0$ behaviourally. TOT $= 69\%$, a very high partial retrieval rate. $d_\text{eff} = 17.9$. $n = 3$ seeds for PopQA.}
\label{fig:ed_param}
\end{figure}

\begin{figure}[!ht]
\centering
\includegraphics[width=\textwidth]{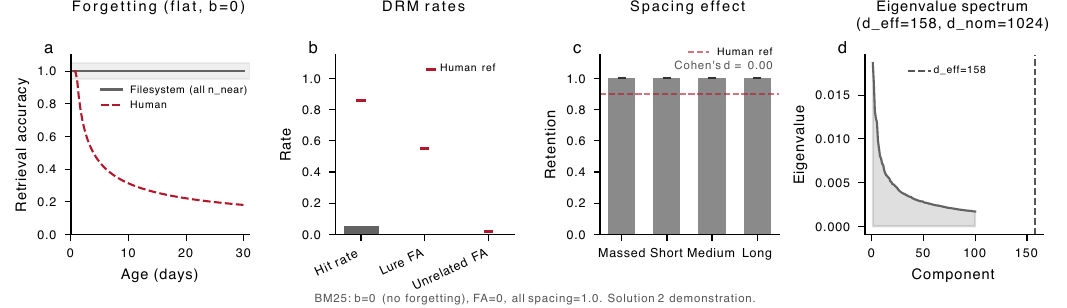}
\caption{\textbf{Extended Data Fig.~6: Filesystem Memory full results.} BM25 keyword retrieval: $b = 0.000$ (no forgetting), FA $= 0$ (no false recall), all spacing conditions $= 1.0$. SPP correlation $r = 0.210$; BM25 weakly satisfies semantic proximity. This architecture demonstrates Solution~2: immunity at the cost of usefulness.}
\label{fig:ed_fs}
\end{figure}

\begin{figure}[!ht]
\centering
\includegraphics[width=0.5\textwidth]{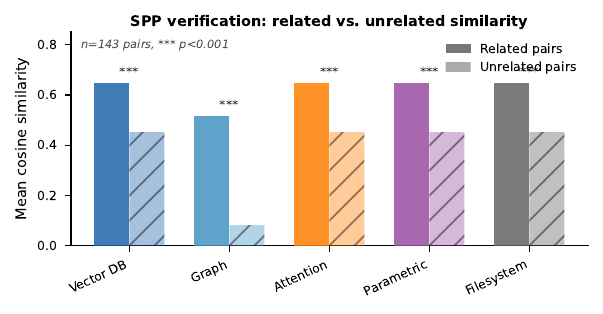}
\caption{\textbf{Extended Data Fig.~7: SPP verification.} Mean similarity for related pairs (same article) vs.\ unrelated pairs (different articles) across all five architectures. All satisfy SPP ($p < 0.001$), with embedding architectures showing stronger separation. $n = 143$ pairs.}
\label{fig:ed_spp}
\end{figure}

\begin{figure}[!ht]
\centering
\includegraphics[width=0.5\textwidth]{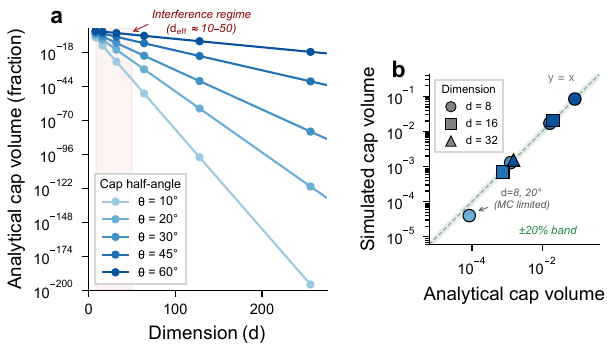}
\caption{\textbf{Extended Data Fig.~8: Spherical cap verification.} \textbf{a},~Analytical cap volume (fraction of sphere) vs.\ dimension for five cap half-angles $\theta \in \{10^\circ, 20^\circ, 30^\circ, 45^\circ, 60^\circ\}$, showing exponential collapse with increasing $d$. Shaded region marks the interference regime ($d_\text{eff} \approx 10$--$50$) where all tested architectures operate. \textbf{b},~Monte Carlo verification: simulated vs.\ analytical cap volume on log--log axes for the $7$ ($d$, $\theta$) combinations where the Monte Carlo sample detected non-zero signal. Six of seven points fall within $\pm 20\%$ of the $y = x$ line; the single outlier ($d = 8$, $\theta = 20^\circ$, ratio $= 0.48$) reflects finite-sample resolution at analytical volume $\approx 8 \times 10^{-5}$, not analytical error. Marker shape encodes dimension; colour encodes $\theta$ (matching \textbf{a}). Confirms Theorem~\ref{thm:capmass}.}
\label{fig:ed_caps}
\end{figure}

\begin{figure}[!ht]
\centering
\includegraphics[width=\textwidth]{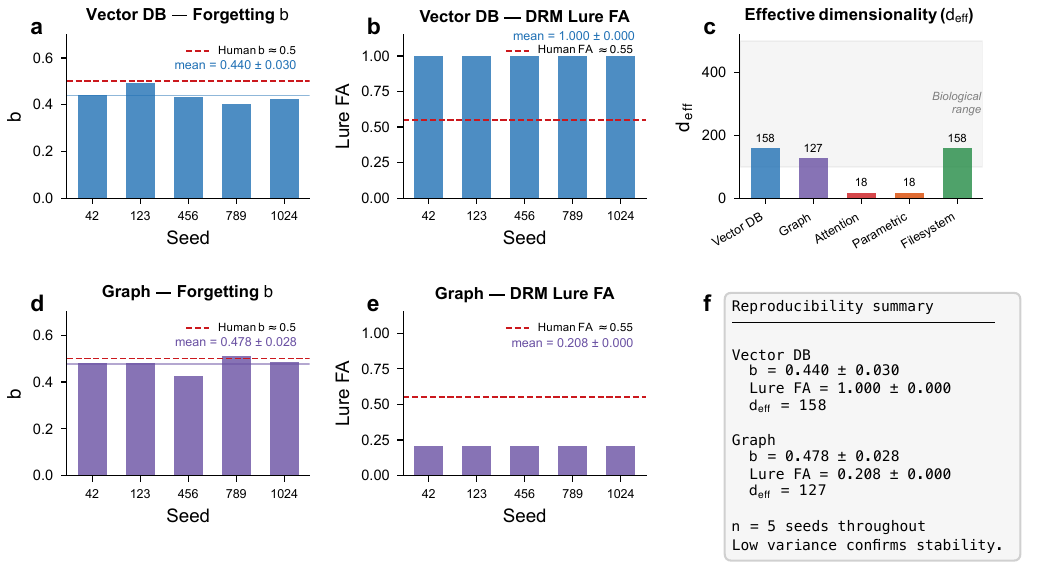}
\caption{\textbf{Extended Data Fig.~9: Reproducibility across seeds.} Per-seed values of $b$, DRM lure FA, and $d_\text{eff}$ for Vector DB and Graph architectures. Low variance confirms reproducibility. $n = 5$ seeds.}
\label{fig:ed_repro}
\end{figure}

\begin{figure}[!ht]
\centering
\includegraphics[width=\textwidth]{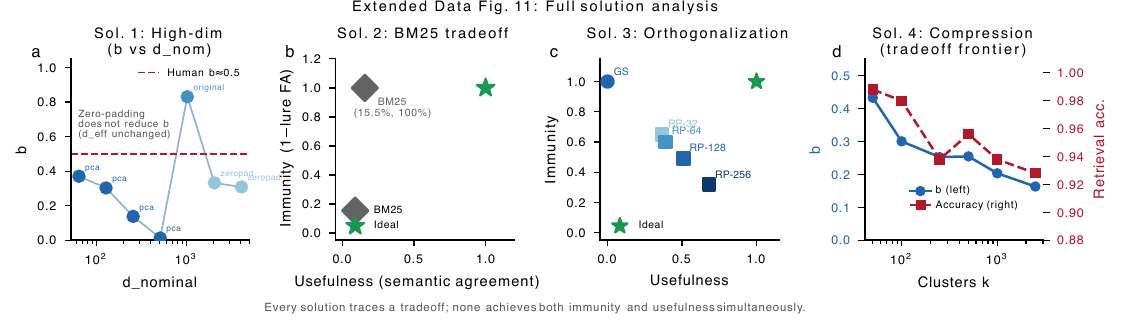}
\caption{\textbf{Extended Data Fig.~10: Full solution analysis.} \textbf{a},~$b$ vs.\ nominal dimensionality. \textbf{b},~BM25 immunity vs.\ usefulness. \textbf{c},~Orthogonalisation methods. \textbf{d},~Compression: $b$ (blue) and accuracy (red) vs.\ cluster count. Every solution traces a tradeoff; none achieves both immunity and usefulness.}
\label{fig:ed_solutions}
\end{figure}

\end{document}